\tikzstyle{arrow} = [thick,->,>=stealth]
\tikzstyle{process} = [rectangle, minimum width=2.5cm, minimum height=1cm, line width=0.8pt, text centered, draw=black, fill=gray!10]
\tikzstyle{sum} = [circle, minimum width=0.5cm, minimum height=0.5cm, line width=0.8pt, draw=black,  fill=gray!30]
\renewcommand{\theenumi}{\arabic{enumi}}
\newtheorem{lemma}{Lemma}
\newtheorem{theorem}{Theorem}
\newtheorem{definition}{Definition}
\newtheorem{assumption}{Assumption}
\newtheorem{proposition}{Proposition}
\newtheorem{remark}{Remark}
\newtheorem*{proof}{Proof}
\newcommand{\VB}[1]{\textcolor{black}{#1}} 
\begin{document}
	\tikzstyle{decision} = [diamond, draw, fill=blue!5, text badly centered, inner sep=0pt]
	\tikzstyle{block} = [rectangle, draw, fill=blue!5, 
	text width=15em, text centered, rounded corners, minimum height=4em]
	\tikzstyle{line} = [draw, -latex']
	\tikzstyle{start} = [draw, ellipse,fill=red!5,minimum height=2em]

	\begin{frontmatter}
		\title{Explainable data-driven modeling via mixture of experts:\\ towards effective blending of grey and black-box models} 

		\author[POLIMI]{Jessica Leoni}\ead{jessica.leoni@polimi.it},   	
		\author[TUE]{Valentina Breschi}, 	
		\author[POLIMI]{Simone Formentin}, 
		\author[POLIMI]{Mara Tanelli}
		\address[POLIMI]{Dipartimento di Elettronica, Bioingegneria e Informazione, Politecnico di Milano, Via Ponzio 34/5, Milano, Italy}  
		\address[TUE]{Department of Electrical Engineering, Eindhoven University of Technology, 5600 MB Eindhoven, The Netherlands}  %
		\begin{abstract}
                Traditional models grounded in first principles often struggle with accuracy as the system's complexity increases. Conversely, machine learning approaches, while powerful, face challenges in interpretability and in handling physical constraints. Efforts to combine these models often often stumble upon difficulties in finding a balance between accuracy and complexity. To address these issues, we propose a comprehensive framework based on a \textquotedblleft mixture of experts\textquotedblright \ rationale. This approach enables the data-based fusion of diverse local models, leveraging the full potential of first-principle-based priors. Our solution allows independent training of experts, drawing on techniques from both machine learning and system identification, and it supports both collaborative and competitive learning paradigms. To enhance interpretability, we 
                penalize abrupt variations in the expert's combination. Experimental results validate the effectiveness of our approach in producing an interpretable combination of models closely resembling the target phenomena. 

		\end{abstract}
		
		\begin{keyword}
            Explainable data-driven modeling; Grey box modeling; Black box modeling; Mixture of experts
		\end{keyword}
	\end{frontmatter}
	
	\section{Introduction}\label{Sec:intro}
 Over recent decades, advances in mechanics and electronics have led to the development of increasingly sophisticated systems with complex and multi-physics dynamics, exposing limitations in first principle-based representations \cite{fujimoto2017research}. Modeling these advanced systems purely based on domain knowledge may inadequately capture the overall system behavior, often necessitating the formulation of complex partial differential equations. However, this approach may compromise model interpretability and be computationally prohibitive in practical applications \cite{karniadakis2021physics}.

These challenges have driven progress in system identification and machine learning, providing data-driven alternatives to traditional physics-based approaches. Machine learning (ML), in particular, has demonstrated its effectiveness in capturing intricate patterns in data and in making accurate predictions about system behavior without explicit knowledge of underlying physics \cite{bikmukhametov2020combining}. Despite its success, ML models often lack interpretability and face limitations in incorporating the physical constraints of a system \cite{xiong2002grey}.


        \begin{figure*}
        \centering
        \begin{tabular}{ccc}
        \subfigure[Serial: training]{
        \scalebox{.5}{\begin{tikzpicture}
            \node[coordinate] (trw) {}; 
                \node[draw,rectangle, below right of=trw,node distance=1.25cm,minimum width=1em,minimum height=2.5em,fill=gray!5!white] (whitebox) {Gray box};
			\node[coordinate,below of=trw,node distance=2.5cm]         (tlw) {}; 
                \draw[->] (trw) -| node[xshift=-1.75cm]{\begin{tabular}{c}Training\vspace{-.2cm}\\regressors\end{tabular}}(whitebox);
                \draw[->] (tlw) -| node[xshift=-1.75cm]{\begin{tabular}{c}Training\vspace{-.2cm}\\outputs\end{tabular}}(whitebox);
                 \node[coordinate, right of=whitebox,node distance=.85cm] (aid1) {};
                \node[coordinate, below of=aid1,node distance=.3cm] (aid1bis) {};
                \node[draw,circle,right of=tlw,node distance=2.25cm] (sum1) {};
                \draw[->] (tlw) -- node[xshift=.8cm,yshift=-.2cm]{-}(sum1);
                \draw[->,blue] (aid1bis) -| node[xshift=1.2cm,yshift=-.3cm] {\begin{tabular}{c}\textcolor{blue}{Reconstructed}\vspace{-.2cm}\\ \textcolor{blue}{outputs}\end{tabular}} node[xshift=-.2cm,yshift=-1cm]{\textcolor{black}{+}}(sum1);
                \node[coordinate,right of=whitebox,node distance=.85cm] (aid2) {};
                \node[coordinate,above of=aid2,node distance=.3cm] (aid2bis) {};
                \node[coordinate,right of=aid2bis,node distance=1.25cm] (outw1) {};
                \draw[->] (aid2bis) -- node[xshift=1.45cm]{\begin{tabular}{c}Gray box\vspace{-.2cm}\\Parameters\end{tabular}}(outw1);
                \node[coordinate,below of=sum1,node distance=1cm] (trb) {}; 
                \node[draw,rectangle, fill=black, right of=sum1,node distance=3cm,minimum width=1em,minimum height=2.5em] (blackbox) {\textcolor{white}{Black box}};
                \draw[->] (sum1) -- (blackbox);
                \draw[->] (trb) -| node[xshift=-3.75cm]{\begin{tabular}{c}Training\vspace{-.2cm}\\regressors\end{tabular}}(blackbox);
               \node[coordinate,right of=blackbox,node distance=1.5cm] (outb) {};
                \draw[->] (blackbox) -- node[xshift=1.25cm]{\begin{tabular}{c}Black box\vspace{-.2cm}\\Parameters\end{tabular}}(outb);
        \end{tikzpicture}}} & \subfigure[Parallel: training]{\scalebox{.5}{\begin{tikzpicture}
                \node[coordinate] (trw) {}; 
                \node[draw,rectangle, below right of=trw,node distance=1.25cm,minimum width=1em,minimum height=2.5em,fill=gray!5!white] (whitebox) {Gray box};
			\node[coordinate,below of=trw,node distance=1.75cm]         (tlw) {}; 
                \draw[->] (trw) -| node[xshift=-1.75cm]{\begin{tabular}{c}Training\vspace{-.2cm}\\regressors\end{tabular}}(whitebox);
                \draw[->] (tlw) -| node[xshift=-1.75cm]{\begin{tabular}{c}Training\vspace{-.2cm}\\outputs\end{tabular}}(whitebox);
                \node[coordinate,right of=whitebox,node distance=1.5cm] (outw) {};
                \draw[->] (whitebox) -- node[xshift=1.25cm]{\begin{tabular}{c}Gray box\vspace{-.2cm}\\Parameters\end{tabular}}(outw);
                \node[coordinate,below of=trw,node distance=3cm] (trb) {}; 
                \node[draw,rectangle, fill=black, below right of=trb,node distance=1.25cm,minimum width=1em,minimum height=2.5em] (blackbox) {\textcolor{white}{Black box}};
                \node[coordinate,below of=trb,node distance=1.75cm]         (tlb) {}; 
                \draw[->] (trb) -| node[xshift=-1.75cm]{\begin{tabular}{c}Training\vspace{-.2cm}\\regressors\end{tabular}}(blackbox);
                \draw[->] (tlb) -| node[xshift=-1.75cm]{\begin{tabular}{c}Training\vspace{-.2cm}\\outputs\end{tabular}}(blackbox);
                \node[coordinate,right of=blackbox,node distance=1.5cm] (outb) {};
                \draw[->] (blackbox) -- node[xshift=1.25cm]{\begin{tabular}{c}Black box\vspace{-.2cm}\\Parameters\end{tabular}}(outb);
        \end{tikzpicture}}} & \subfigure[Ensemble: training]{\scalebox{.5}{\begin{tikzpicture}
                \node[coordinate] (trw) {}; 
                \node[draw,rectangle, below right of=trw,node distance=1.25cm,minimum width=1em,minimum height=2.5em, fill=black] (blackbox1) {{\textcolor{white}{Black box}}};
			\node[coordinate,below of=trw,node distance=1.75cm]         (tlw) {}; 
                \draw[->] (trw) -| node[xshift=-2cm]{\begin{tabular}{c}Partition of\vspace{-.2cm}\\training regressors\end{tabular}}(blackbox1);
                \draw[->] (tlw) -| node[xshift=-2cm]{\begin{tabular}{c}Partition of\vspace{-.2cm}\\training outputs\end{tabular}}(blackbox1);
                \node[coordinate,right of=blackbox1,node distance=1.5cm] (outw) {};
                \draw[->] (blackbox1) -- node[xshift=1.25cm]{\begin{tabular}{c}Black box\vspace{-.2cm}\\Parameters\end{tabular}}(outw);
                \node[coordinate,below of=trw,node distance=3cm] (trc) {}; 
                \node[draw,rectangle, fill=black, below right of=trc,node distance=1.25cm,minimum width=1em,minimum height=2.5em] (blackbox2) {\textcolor{white}{Black box}};
                \node[coordinate,below of=trc,node distance=1.75cm]         (tlc) {}; 
                \draw[->] (trc) -| node[xshift=-2cm]{\begin{tabular}{c}Partition of\vspace{-.2cm}\\training regressors\end{tabular}}(blackbox2);
                \draw[->] (tlc) -| node[xshift=-2cm]{\begin{tabular}{c}Partition of\vspace{-.2cm}\\training outputs\end{tabular}}(blackbox2);
                \node[coordinate,right of=blackbox2,node distance=1.5cm] (outc) {};
                \draw[->] (blackbox2) -- node[xshift=1.25cm]{\begin{tabular}{c}Black box\vspace{-.2cm}\\Parameters\end{tabular}}(outc);
        \end{tikzpicture}}}\\
        \subfigure[Serial: prediction]{
        \scalebox{.5}{\begin{tikzpicture}
            \node[rectangle] (data2) {\begin{tabular}{c}New \vspace{-.2cm}\\regressors\end{tabular}};
             \node[coordinate,right of=data2,node distance=1cm] (data) {};
             \node[coordinate,above of=data2,node distance=.5cm] (input1) {};
             \node[coordinate,below of=data2,node distance=.5cm] (input2) {};
             \node[draw,rectangle, above right of=data,node distance=1.25cm,minimum width=1em,minimum height=2.5em,fill=gray!5!white] (whitebox) {Gray box};
             \node[draw,rectangle, fill=black, below right of=data,node distance=1.25cm,minimum width=1em,minimum height=2.5em] (blackbox) {\textcolor{white}{Black box}};
             \node[draw,circle,right of=data=node,node distance=2.5cm] (sum1) {}; 
             \node[coordinate,right of=sum1,node distance=.75cm] (outp) {};
             \draw[->] (input1) |- node[]{}(whitebox);
             \draw[->] (input2)  |- node[]{}(blackbox);
             \draw[->] (whitebox) -| node[xshift=-.15cm,yshift=-.5cm]{+} (sum1);
             \draw[->] (blackbox) -| node[xshift=-.15cm,yshift=.5cm]{+} (sum1);
             \draw[->] (sum1) -- node[xshift=1.15cm]{Prediction} (outp);
             \node[rectangle,below of=blackbox,node distance=.5cm] (aid3) {}; 
        \end{tikzpicture}}} & \subfigure[Parallel: prediction]{\scalebox{.5}{\begin{tikzpicture}
            \node[rectangle] (data2) {\begin{tabular}{c}New \vspace{-.2cm}\\regressors\end{tabular}};
             \node[coordinate,right of=data2,node distance=1cm] (data) {};
             \node[coordinate,above of=data2,node distance=.5cm] (input1) {};
             \node[coordinate,below of=data2,node distance=.5cm] (input2) {};
             \node[draw,rectangle, above right of=data,node distance=1.25cm,minimum width=1em,minimum height=2.5em,fill=gray!5!white] (whitebox) {Gray box};
             \node[draw,rectangle, fill=black, below right of=data,node distance=1.25cm,minimum width=1em,minimum height=2.5em] (blackbox) {\textcolor{white}{Black box}};
             \node[draw,circle,right of=data=node,node distance=2.5cm] (sum1) {}; 
             \node[coordinate,right of=sum1,node distance=.75cm] (outp) {};
             \draw[->] (input1) |- node[]{}(whitebox);
             \draw[->] (input2)  |- node[]{}(blackbox);
             \draw[->] (whitebox) -| node[xshift=-.15cm,yshift=-.5cm]{+} (sum1);
             \draw[->] (blackbox) -| node[xshift=-.15cm,yshift=.5cm]{+} (sum1);
             \draw[->] (sum1) -- node[xshift=1.15cm]{Prediction} (outp);
             \node[rectangle,below of=blackbox,node distance=.5cm] (aid3) {};
        \end{tikzpicture}}} & 
        \subfigure[Ensemble: prediction]{
        \scalebox{.5}{\begin{tikzpicture}
        \node[coordinate] (start) {}; 
         \node[draw, fill=blue!5!white, rectangle,minimum width=1em,minimum height=2.5em, right of=trw, node distance=1.5cm] (splitter) {\begin{tabular}{c}Data\vspace{-.2cm}\\ split\end{tabular}};
         \node[coordinate,above of=splitter,node distance=1cm] (aid1){};
         \node[draw, rectangle, right of=aid1, node distance=2.5cm, minimum width=1em,minimum height=2.5em, fill=black] (model1) {{\textcolor{white}{Black box}}};
         \node[coordinate,below of=splitter,node distance=1cm] (aid2){};
            \node[draw, rectangle, right of=aid2, node distance=2.5cm, minimum width=1em,minimum height=2.5em, fill=black] (model2) {{\textcolor{white}{Black box}}};
        \node[draw, fill=blue!5!white, rectangle, minimum width=1em,minimum height=2.5em, right of=splitter, node distance=5cm] (merge) {\begin{tabular}{c}Output\vspace{-.2cm}\\ combiner\end{tabular}};
         \node[rectangle,below of=model2,node distance=.5cm] (aid3) {};
        \node[coordinate, right of=merge, node distance=1.5cm] (end) {};
        \draw[->] (splitter) |- (model1);
        \draw[->] (splitter) |- (model2);
        \draw[->] (start) -- node[xshift=-1cm]{\begin{tabular}{c}New\vspace{-.2cm}\\regressors\end{tabular}}(splitter);
        \draw[->] (model1) -| (merge);
        \draw[->] (model2) -| (merge);
        \draw[->] (merge) -- node[xshift=1.2cm]{Predictions} (end);
        \end{tikzpicture}}}  
        \end{tabular}
        \caption{A schematic overview of the training [upper panels] and prediction [lower panels] logic of serial, parallel, and ensemble approaches. Gray and black box models are respectively depicted with gray and black rectangles, respectively. For the sake of simplicity, the ensemble is depicted considering only 2 black-box local models, but it can in principle include as many experts as one likes. }\label{fig:schemes_1}
    \end{figure*}
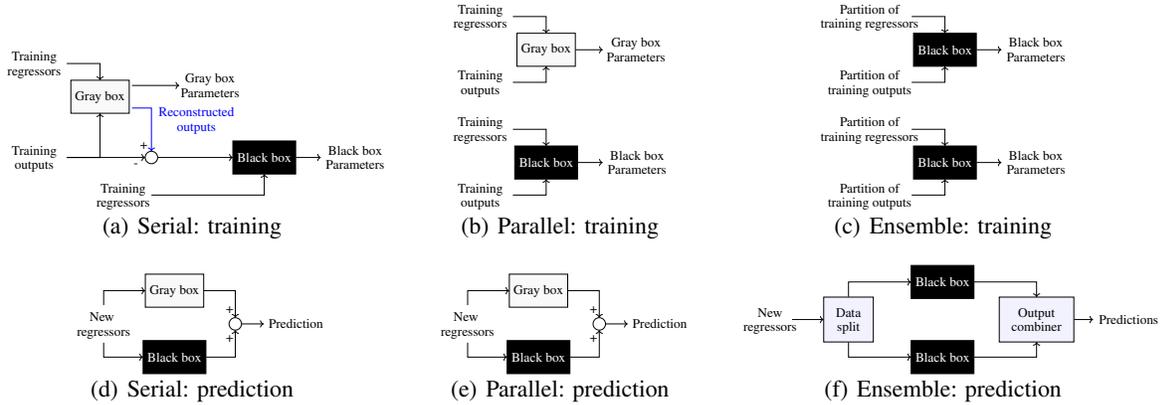

Efforts to integrate the interpretability and robustness of first principles with the predictive capabilities of machine learning (ML) have resulted in various approaches \cite{molnar2020interpretable, du2019techniques}. These approaches fall into four categories: \emph{physic-constrained}, \emph{serial}, \emph{parallel}, and \emph{ensemble} strategies.
In the \emph{physic-constrained} category, techniques either integrate physically meaningful features from first principles into ML models or explicitly include physical constraints, such as boundary conditions, into the loss function (see, e.g., the working principle of physics-informed neural networks (PINN)) \cite{bikmukhametov2020combining, cuomo2022scientific}. Despite their successful application, these techniques often yield models that are still black boxes, limiting interpretability and explainability.
\emph{Serial approaches}, illustrated in \figurename{~\ref{fig:schemes_1}}, use data to fit a simple model and ML models to capture the residual, representing unexplained aspects by the simple physics-based model \cite{gray2018hybrid}. Both \emph{gray box} physics-based and \emph{black box} ML models contribute to predictions, but the non-negligible noise contribution to the residual can impact the quality of the ML model.
Instead, \emph{parallel approaches} involve concurrently fitting both gray box and black box models on the same data for predictions \cite{xiong2002grey}. Although aiming for a more comprehensive system description, this structure may not provide a clear understanding of the standalone physics-based and ML models' range of validity.
\emph{Ensemble strategies}, such as bagging, boosting, and mixtures of experts (MoE), operate based on the \emph{divide et impera} principle, where different models learn specific patterns and combine predictions to enhance accuracy \cite{breiman1996bagging, freund1996experiments, jordan1994hierarchical}. MoEs, in particular, assign specialized ML models to different feature spaces and combine their outputs through gating, providing an explanation of each submodel's role. However, existing MoE architectures often lack interpretability as they primarily rely on neural networks and seldom include physics-based experts \cite{bischof2022mixture}.

    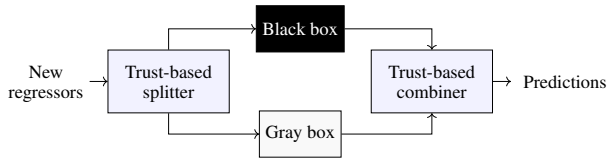
\begin{figure}[!tb]
    \centering
         \scalebox{.7}{\begin{tikzpicture}
        \node[coordinate] (start) {}; 
         \node[draw, fill=blue!5!white, rectangle,minimum width=1em,minimum height=2.5em, right of=trw, node distance=1.5cm] (splitter) {\begin{tabular}{c}Trust-based\vspace{-.2cm}\\ splitter\end{tabular}};
         \node[coordinate,above of=splitter,node distance=1cm] (aid1){};
         \node[draw, rectangle, right of=aid1, node distance=2.5cm, minimum width=1em,minimum height=2.5em, fill=black] (model1) {{\textcolor{white}{Black box}}};
         \node[coordinate,below of=splitter,node distance=1cm] (aid2){};
            \node[draw, rectangle, right of=aid2, fill=gray!5!white, node distance=2.5cm, minimum width=1em,minimum height=2.5em] (model2) {{Gray box}};
        \node[draw, fill=blue!5!white, rectangle, minimum width=1em,minimum height=2.5em, right of=splitter, node distance=5cm] (merge) {\begin{tabular}{c}Trust-based\vspace{-.2cm}\\ combiner\end{tabular}};
         \node[rectangle,below of=model2,node distance=.5cm] (aid3) {};
        \node[coordinate, right of=merge, node distance=1.5cm] (end) {};
        \draw[->] (splitter) |- (model1);
        \draw[->] (splitter) |- (model2);
        \draw[->] (start) -- node[xshift=-1cm]{\begin{tabular}{c}New\vspace{-.2cm}\\regressors\end{tabular}}(splitter);
        \draw[->] (model1) -| (merge);
        \draw[->] (model2) -| (merge);
        \draw[->] (merge) -- node[xshift=1.2cm]{Predictions} (end);
        \end{tikzpicture}}
        \caption{A scheme of the structure proposed for prediction, where the expertise of gray and black box models is combined based on their suitability (here indicated as \emph{trust}) to describe the new data. For the sake of simplicity, we consider only 2 experts, but we could in principle include multiple experts of the two kinds.}\label{fig:scheme_2}
    \end{figure}
    While incorporating physical constraints into the neural local models certainly enhances the interpretability of ensemble models, the structure proposed in \cite{bischof2022mixture} still relies on experts that are all PINNs. This might make the overall structure unnecessarily complex, especially when there are known behaviors of the system that can be described by (simple) first principle-based equations.   
    To strike a balance between the intelligibility of simple first principle models and the predictive power of more complex ML models, in this work we propose a general ensemble scheme that allows us to:
    \begin{enumerate}
        \item train the local experts \emph{independently};
        \item address the challenge of identifying the 
        number of experts to be used at each time instant \cite{xiong2002grey}\VB{, limiting their variations over consecutive instants}.
    \end{enumerate}
     This is possible thanks to the formulation of a \emph{new fitting objective} that:
    \begin{enumerate}
        \item encompasses the possibility of having both \emph{collaborative} and/or \emph{competitive} experts, allowing local models that are specialized in describing a specific (local) behavior of the system, and blending experts' predictions whenever only one of them is not sufficient to characterize the system behavior;
        \item incorporates a penalization of \emph{abrupt variations} on the combination of experts used to explain the data at consecutive time instants, thus explicitly accounting for the ordering of data while preventing sudden changes in the exploited experts when physically incompatible with the behavior of the system.     
    \end{enumerate}
    This new formulation ultimately enables us to \textbf{easily blend gray box and black box models}, leading to an explainable mixture of experts as in \figurename{~\ref{fig:scheme_2}}, that can easily benefit from all the prior knowledge available on the system.
    In introducing the new cost function, inspired by the one introduced in \cite{Johansen2003} for the identification of Takagi–Sugeno fuzzy models, we further provide a statistical interpretation of each of its components, toward explaining the role of its hyper-parameters. Moreover, we propose an alternate approach to solve the proposed fitting problem that allows us to benefit from established estimation techniques to learn the experts, showing its effectiveness in both a numerical example and an experimental case study.
         

    The paper is organized as follows. Section~\ref{Sec:Setting} introduces the considered setting and goal, followed by a review of the modeling frameworks tightly connected with the one considered in this work in Section~\ref{Sec:relation}. The proposed cost for MoE fitting is then introduced in Section~\ref{Sec:Bayesian}, along with its statistical interpretation. An overview of the approach proposed to solve the fitting problem is given in Section~\ref{Sec:methodology}, while additional details are provided in Sections~\ref{sec:Experts}-\ref{sec:mixtures}. Our numerical and experimental results are then discussed in Section~\ref{sec:examples}. The paper is ended by some concluding remarks.
	
	\section{Setting and goal}\label{Sec:Setting}
	Let us consider a data-generating system of the form:
	\begin{equation}\label{eq:true_system}
		y(t)=f^{\mathrm{o}}(x(t))+e(t),
	\end{equation}
	with $y(t) \in \mathbb{R}$ and $x(t) \in \mathcal{X} \subseteq \mathbb{R}^{n_{x}}$ being the system's output and the associated \emph{regressor} (or \emph{feature vector}), respectively, while $e(t) \in \mathbb{R}$ is a zero-mean, random noise independent from $x(t)$, and $f^{\mathrm{o}}: \mathcal{X} \rightarrow \mathbb{R}$ is an \emph{unknown}, nonlinear function characterizing the features/output relationship. Nonetheless, suppose that some priors on the local behavior of the data-generating system are known (\emph{e.g.,} given by first principles). Additionally, let us assume that the number 
 $M \in \mathbb{N}$ 
 of 
 its possible operating regimes is also known.
	
	Given a \emph{training} sequence of feature/output pairs\footnote{The data should be \textquotedblleft exploratory enough\textquotedblright \ for all the operating regimes to guarantee an accurate feature/output function approximation.} $\{x(t),y(t)\}_{t=1}^{T}$ originated by the system in \eqref{eq:true_system}, our aim is to approximate its \emph{hidden} feature/output relationship with a \emph{convex combination of experts}, \emph{i.e.,}  
	\begin{subequations}\label{eq:mixture_of_models}
	\begin{equation}\label{eq:mixture}
		y(t) \approx \sum_{i=1}^{M} \omega_{i}(t) f_{i}(x(t);\theta_{i}),
	\end{equation}
	where $\theta_{i} \in \mathbb{R}^{n_{\theta_i}}$ are the parameters of each local model, whose dimension might vary dependently on their specific implementation. These \emph{local experts} can indeed be characterized by \emph{different} functions $f_{i}: \mathcal{X} \times \mathbb{R}^{n_{\theta_i}} \rightarrow \mathbb{R}$, whose features can be molded to embed \emph{constitutive priors} on the local behavior of the data-generating system. The weights $\omega_{i}(t)$, with $i=1,\ldots,M$, indicate the relative importance of each local model in describing the feature/output relationship, and they should satisfy the following:
	\begin{align}
		& \omega_{i}(t) \in [0,1],~~ \forall i \in \{1,2,\ldots,M\},~\forall t, \label{eq:probability_of_one}\\ 
		& \sum_{i=1}^{M}\omega_{i}(t)=1,~\forall t. \label{eq:sum_probability}
	\end{align}
	so that the mixture result in a convex combination of the local behaviors at each time step.
	\end{subequations}
	Since our goal is to fully characterize the \emph{convex combination of experts} from the training data, our goal is thus to \emph{concurrently learn} $(i)$ the parameters $(\theta_{1},\ldots,\theta_{M})$ of the local models and $(ii)$ the weights $(\omega_{1}(t),\ldots,\omega_{M}(t))$, for all $t=1,\ldots,T $.
	
	\begin{remark}[On the regressor $x(t)$]
		The combination of experts in \eqref{eq:mixture_of_models} can be used to characterize both \emph{static} and \emph{dynamic} feature/output relationships, depending on the nature of the regressor $x(t)$. Indeed, whenever $x(t)$ depends on past realizations of the outputs, the local models in \eqref{eq:mixture} are dynamic and, thus, their combination will result in a dynamical model.
	\end{remark}

        \section{Relationship with other modeling frameworks}\label{Sec:relation}

        In this section, we reinterpret \eqref{eq:mixture_of_models} to show its connection with \emph{probabilistic mixtures}, \emph{fuzzy}, and \emph{jump} models.  
        
	\textit{Mixture models.} Let us now regard the regressor $x(t)$ and the output $y(t)$ of the data-generating system \eqref{eq:true_system} at time $t$ as realizations of two random variables, namely $\mathbf{X}$ and $\mathbf{Y}$ respectively. Assume that $\mathbf{Y}$ can have $M$ possible conditional \emph{probability density functions} (PDFs) $p_{i}(\mathbf{Y}=y(t)|\mathbf{X}=x(t))$, with mean $\mu_{i}(x(t))$ and variance $\sigma_{i}^{2}(x(t))$, for $i=1,\ldots,M$. Let $\omega_{i}(t)$ denote the probability of the output at time $t$ being distributed according to the $i$-th PDF, namely
		\begin{equation}\label{eq:weight_prob}
			\omega_{i}(t) \sim Pr\left (\mathbf{Y} \sim p_{i}(\cdot)|\mathbf{X}=x(t)\right),~~i=1,\ldots,M.
		\end{equation}
		Under these assumptions, the conditional PDF of $\mathbf{Y}$ to $\mathbf{X}$ is given by the mixture
		\begin{equation}\label{eq:mixture_def}
			p(\mathbf{Y}\!\!=\!y(t)|\mathbf{X}\!=\!x(t))\!=\!\!\sum_{i=1}^{M}\omega_{i}(t)p_{i}(\mathbf{Y}\!\!=\!y(t)|\mathbf{X}\!=\!x(t)),
		\end{equation}
		according to the total probability theorem. By further assuming that the conditional mean value of the $i$-th component of the mixture $\mu_{i}(x(t))$ corresponds to the local model $f_{i}(x(t);\theta_{i})$, 
		 the combination of experts in \eqref{eq:mixture_of_models} corresponds to the mean value of the mixture \eqref{eq:mixture_def}, \emph{i.e.,}
		\begin{equation}
			\hat{y}(t;\Theta)=\mathbb{E}[y(t)|x(t)]=\sum_{i=1}^{M}\omega_{i}(t)f_{i}(x(t);\theta_i).
		\end{equation}
		Note that, when the \textquotedblleft local\textquotedblright \ conditional PDFs are Gaussian, namely
		\begin{equation}\label{eq:local_dist}
			p_{i}(\mathbf{Y}=y(t)|\mathbf{X}=x(t)) \sim \mathcal{N}\left(\mu_{i}(x(t)),\sigma_{i}^{2}(x(t))\right),
		\end{equation}
		 then the considered model corresponds to the mean of a \emph{Gaussian mixture}.
		 
		 \begin{remark}[Weights and features]
		 	According to the definition in \eqref{eq:weight_prob}, each $\omega_{i}(t)$ depends on $x(t)$, for $i=1,\ldots,M$. If the weights are not assumed to be statistically independent from the feature vector, this in turn implies that one has to learn the relationship between weights and features, in addition to identifying the local models and the weights.
		 \end{remark}

        \textit{Fuzzy models.} From a functional perspective, the considered model class \eqref{eq:mixture} exhibits a striking resemblance fuzzy models\footnote{The link between Gaussian Mixtures and fuzzy models using Gaussian membership functions and a t-norm product as the rule aggregation method is instead shown in \cite{bersini1997now}.} (see, \emph{e.g.,} \cite{Johansen2003,babuvska2003neuro,bartczuk2016new}). Indeed, the output of a fuzzy model can be characterized as
    \begin{subequations}\label{eq:TS_fuzzy}
        \begin{equation}
            \hat{y}(t)=\sum_{i=1}^{M}\phi_{i}(t)f_{i}(x(t)),
        \end{equation}
        where 
        \begin{equation}\label{eq:fuzzy_weights}
            \phi_{i}(t)=\frac{\varphi_{i}(t)}{\sum_{i=1}^{M}\varphi_{i}(t)},
        \end{equation}
        and $\varphi_{i}(t)\geq 0$ are the membership functions (see \emph{e.g.,} \cite{Fantuzzi96}) of the fuzzy sets dictated by the IF-THEN rules used to approximate a complex input/output relationship \cite{zadeh1965zadeh}.
        \end{subequations}
        From this definition, it is straightforward to show that
        \begin{equation*}
        \phi_{i}(t) \in [0,1],~~ \sum_{i=1}^{M}\phi_{i}(t)=1.
        \end{equation*}
        Therefore, by narrowing the class of weighting functions, fuzzy models can be seen as particular instances of the more general combination in \eqref{eq:mixture}, characterized by the specific weighting choice in \eqref{eq:fuzzy_weights}.

\textit{Jump models.} Let us assume that at each time instant $t$ there exists only one $i \in \{1,\ldots,M\}$ such that $\omega_{i}(t)=1$. Accordingly, based on \eqref{eq:sum_probability}, the following holds
\begin{equation}
    \omega_{j}(t)=0,~~\forall j\neq i,~j=1,\ldots,M.
\end{equation}
In this scenario, we can define new variable $s(t) \in \{1,\ldots,M\}$ embedding this feature of the model, \emph{i.e.,}
\begin{equation}\label{eq:s_constitutive}
	s(t)=i \iff \omega_{i}(t)=1,
\end{equation}
which corresponds to the \emph{latent mode} in \cite{bemporad2018fitting}. Under our assumption on the mixture weights, \eqref{eq:mixture_of_models} can thus be equivalently written as
\begin{equation}
	y(t) \approx f_{s(t)}\left(x(t);\theta_{s(t)}\right),
\end{equation}
where the dependence on $\Omega(t)$ is now replaced with that on $s(t)$. This descriptor matches the jump model considered in \cite{bemporad2018fitting}, while being more general, as the local models are not assumed to be equal. 
	\section{Learning convex combinations of experts from data}\label{Sec:Bayesian}
	Let $X=(x(1),\ldots,X(T))$ and $Y=(y(1),\ldots,y(T))$ denote the feature and output training sequences, respectively. Moreover,  let us indicate the vector of unknown parameters of the experts as
	\begin{equation}\label{eq:overall_parameters}
		\Theta=\begin{smallmatrix}
			\begin{bmatrix}
				\theta_{1}^{\top} & \ldots & \theta_{M}^{\top}
			\end{bmatrix}^{\top}
		\end{smallmatrix} \in \mathbb{R}^{n_{\Theta}}	
	\end{equation}
	let the set of \emph{feasible} weights $\Omega=\{\Omega(t)\}_{t=1}^{T}$ be defined as
	\begin{equation}\label{eq:feasibility_set}
		\mathcal{F}\!=\!\left\{\!\Omega: \Omega(t) \in [0,1]^{M},  \mathbbm{1}^{\!\top}\Omega(t) =1, t\!=\!1,\ldots,T\right\},
	\end{equation}
	with 
	\begin{equation}\label{eq:omega_t}
		\Omega(t)=\begin{smallmatrix}
			\begin{bmatrix}
				\omega_{1}(t) & \ldots & \omega_{M}(t)
			\end{bmatrix}^{\top}
		\end{smallmatrix}\!\!.
	\end{equation}
	To learn the parameters $\Theta$ and the weights $\Omega$ of the convex combination of experts from the training data, we introduce the following \emph{fitting objective}, 
	\begin{subequations}\label{eq:fitting_cost}
		\begin{equation}\label{eq:overall_cost1}
			J(X,Y;\Omega,\Theta)\!=\!\!\ell(X,Y;\Omega,\Theta)+r(\Theta)+\mathcal{L}(\Omega),
		\end{equation}
	that comprises three elements:
	\begin{itemize}
		\item[$(i)$] a \emph{loss function} $\ell\!: \mathbb{R} \times \mathbb{X} \times \mathcal{F} \times \mathbb{R}^{n_{\Theta}} \!\rightarrow\! \mathbb{R} \cup \{+\infty\}$;
		\item[$(ii)$] a \emph{group regularizer} $r(\Theta)$ defined as:  
		\begin{equation}
			r(\Theta)=\lambda_{\theta} \sum_{i=1}^{M}r_i(\theta_{i}),
		\end{equation}
		where $r_{i}: \mathcal{R}^{n_{\theta_{i}}} \rightarrow \mathbb{R} \cup \{+\infty\}$ is a local regularization term, potentially different for all $i=1,\ldots,M$ sub-models of \eqref{eq:mixture_of_models}, and $\lambda_{\theta}>0$ is a hyper-parameter of the fitting cost;
		\item[$(iii)$] a \emph{weight shaper} $\mathcal{L}: \mathcal{F} \rightarrow \mathbb{R} \cup \{+\infty\}$.
	\end{itemize}
\end{subequations}
	Accordingly, the fitting problem to learn the convex combination of experts \eqref{eq:mixture_of_models} can be cast as follows:
	\begin{subequations}\label{eq:optimization_and_pred}
		\begin{equation}\label{eq:optimization_problem}
			 \underset{\Theta,\Omega \in \mathcal{F}}{\mbox{min}}~J(X,Y;\Omega,\Theta),
		\end{equation}
	and it has to be concurrently solved with respect to $\Theta$ and $\Omega$. The optimal combination for a given training set $(X,Y)$ is thus defined as
	\begin{equation}\label{eq:optimal_mixture}
		(\Theta^{\star},\Omega^{\star}) \leftarrow \underset{\Omega \in \mathcal{F},\Theta}{\mathrm{argmin}}~J(X,Y;\Omega,\Theta),
	\end{equation}
	resulting in the following prediction given by the ensemble model:
	\begin{equation}\label{eq:ext_mixture}
			\hat{y}(t) \leftarrow \underset{y}{\mathrm{argmin}}~~\ell(x(t),y;\Omega^{\star}(t),\Theta^{\star}),
	\end{equation}
	whose definition stems from the separability of the fitting objective.
	\end{subequations}
	Note that, the fitting loss \eqref{eq:fitting_cost} is tightly linked with the one proposed in \cite{bemporad2018fitting} to learn \emph{jump models}, as formalized in the following lemma. 

	\begin{lemma}[Relation with jump models fitting]\label{lemma:jump_models}
		Assume that the weights $\omega_{i}(t)$ in \eqref{eq:mixture_of_models} are restricted to lay in $\{0,1\}$, for all $i=1,\ldots,M$ and $t=1,\ldots,T$. Then, the loss in \eqref{eq:fitting_cost} corresponds to the one introduced for jump models fitting in \cite{bemporad2018fitting}.
	\end{lemma}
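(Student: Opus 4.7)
The plan is to show that the binary restriction on the weights collapses the three ingredients of the fitting objective \eqref{eq:fitting_cost} into the exact ingredients of the jump model fitting cost of \cite{bemporad2018fitting}, with the mode sequence $\{s(t)\}_{t=1}^{T}$ playing the role of the binarised $\Omega$. The argument is therefore largely structural: it consists in rewriting each term of $J(X,Y;\Omega,\Theta)$ under the restriction and matching it term by term with the corresponding one in \cite{bemporad2018fitting}.

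First, I would reduce the feasibility set. Restricting $\omega_{i}(t) \in \{0,1\}$ together with the simplex constraint $\mathbbm{1}^{\top}\Omega(t)=1$ in \eqref{eq:feasibility_set} forces, at every $t$, exactly one index $i$ to satisfy $\omega_{i}(t)=1$ and all others to vanish. This is precisely the situation already discussed in the \emph{Jump models} paragraph of Section~\ref{Sec:relation}: the mapping \eqref{eq:s_constitutive} establishes a one-to-one correspondence between admissible binary sequences $\{\Omega(t)\}_{t=1}^{T}$ and mode sequences $s=\{s(t)\}_{t=1}^{T}$ with $s(t)\in\{1,\ldots,M\}$. Under this bijection, the mixture \eqref{eq:mixture} reduces to $y(t)\approx f_{s(t)}(x(t);\theta_{s(t)})$, matching the descriptor used in \cite{bemporad2018fitting}.

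Second, I would process the three summands of \eqref{eq:overall_cost1} separately. The loss $\ell(X,Y;\Omega,\Theta)$, evaluated on a binary $\Omega$, picks out only the active expert at each time instant, so it becomes a sum of per-sample losses of the form $\ell(x(t),y(t);\theta_{s(t)})$, which is exactly the mode-dependent fitting term of a jump model. The group regularizer $r(\Theta)=\lambda_{\theta}\sum_{i=1}^{M}r_{i}(\theta_{i})$ involves only the parameters and is therefore preserved verbatim, matching the parameter regularizer used in \cite{bemporad2018fitting}. Finally, $\mathcal{L}(\Omega)$, restricted to the finite set of binary weight sequences, can only depend on $s$ through $\{s(t)\}_{t=1}^{T}$; the penalisation of \emph{abrupt variations} in consecutive weights highlighted in the introduction collapses to a penalty on mode transitions $s(t-1)\to s(t)$, which is precisely the transition cost in the jump model fitting loss of \cite{bemporad2018fitting}.

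The main obstacle I anticipate is the last step: making the identification between $\mathcal{L}(\Omega)$ and the specific transition/mode-regularisation term of \cite{bemporad2018fitting} rigorous. One has to argue that, once $\Omega$ takes values on the vertices of the simplex, any reasonable choice of weight shaper penalising consecutive variations reduces to a function of the pairs $(s(t-1),s(t))$; this is where the LaTeX statement is strongest and where care is needed to show that no information beyond the mode sequence is accessible to $\mathcal{L}$. Once that is settled, the equivalence of the three terms yields \eqref{eq:overall_cost1} in the jump model form, concluding the proof.
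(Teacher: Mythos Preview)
Your proposal is correct and follows essentially the same route as the paper's proof in Appendix~\ref{appendix:A}: use the binary restriction together with the simplex constraint to force a unique active index at each $t$, invoke the bijection \eqref{eq:s_constitutive} with the mode sequence $S=(s(1),\ldots,s(T))$, and rewrite each of the three terms of \eqref{eq:overall_cost1} accordingly. The only difference is that the paper dispatches your anticipated ``obstacle'' in one line: since $\Omega$ and $S$ are in one-to-one correspondence, $\mathcal{L}(\Omega)$ is automatically a function of $S$ and can simply be renamed $\mathcal{L}(S)$; no further structural argument about transition penalties is needed to match the generic mode-sequence loss in \cite[Section~2.3]{bemporad2018fitting}.
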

	\begin{proof}
		The proof can be found in Appendix~\ref{appendix:A}. \hfill $\blacksquare$
	\end{proof}
	This result thus indicates that the framework proposed in this paper encompasses the one presented in \cite{bemporad2018fitting}, with jump models becoming a particular instance of the broader class of convex combination of experts \eqref{eq:mixture_of_models} considered in this work. Meanwhile, the relationship highlighted in Lemma~\ref{lemma:jump_models} allows us to provide a probabilistic interpretation to the objective function in \eqref{eq:fitting_cost}, under similar assumptions to those considered in \cite[Section 3]{bemporad2018fitting}. Indeed, when the subsequent assumptions are satisfied:
	\begin{itemize}
		\item[A1.] the weights $\mathbf{\Omega}$, the model parameters $\mathbf{\Theta}$ and the features $\mathbf{X}$ are statistically independent, \emph{i.e.,}
		\begin{align*}
			& p(\mathbf{\Omega}\!=\!\Omega|\mathbf{\Theta}\!=\!\Theta,\mathbf{X}\!=\!X)=p(\Omega|\Theta,X)=p(\Omega),\\
			& p(\mathbf{\Theta}\!=\!\Theta|\mathbf{\Omega}\!=\!\Omega,\mathbf{X}\!=\!X)=p(\Theta|\Omega,X)=p(\Theta);
		\end{align*} 
		\item[A2.] the priors on the local experts are all equal, \emph{i.e.,}
		\begin{equation*}
			p(\theta_{i})=p(\theta_{j})=p(\theta),~~\forall i,j=1,\ldots,M,
		\end{equation*}
		and the local experts are statistically independent, namely
		\begin{equation*}
			p(\Theta)=\prod_{i=1}^{M} p(\theta_{i});
		\end{equation*}
	\end{itemize}
	we can interpret the objective \eqref{eq:fitting_cost} as follows.
	\begin{proposition}[A probabilistic view on \eqref{eq:fitting_cost}]\label{prop:1}
		Let assumptions A.1-A.2 hold and define
		\begin{subequations}\label{eq:single_def1}
			\begin{align}
				&\ell(X,\!Y\!;\Omega,\!\Theta)\!=\!-\!\log p(\mathbf{Y}\!\!=\!\!Y|\mathbf{\Omega}\!=\!\Omega,\!\mathbf{\Theta}\!=\!\Theta,\!\mathbf{X}\!\!=\!\!X),\\
				&r(\theta_i)=-\log p(\theta_i),~~i=1,\ldots,M,\\
				&\mathcal{L}(\Omega)=-\log p(\Omega).
			\end{align}
		\end{subequations} 
		Then, minimizing $J(X,Y;\Omega,\Theta)$ in \eqref{eq:fitting_cost} and \eqref{eq:single_def1} with respect to $\Theta$ and $\Omega$ corresponds to the maximization of the joint likelihood $p(Y,\Omega,\Theta|X)$.
	\end{proposition}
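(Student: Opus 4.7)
The plan is to show the equivalence by expanding the joint conditional density $p(Y,\Omega,\Theta|X)$ via the chain rule, simplifying it using the independence assumptions A.1 and A.2, and then taking a negative logarithm to match the three terms in \eqref{eq:fitting_cost} under the identifications in \eqref{eq:single_def1}.

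First, I would apply the chain rule for conditional probabilities to write
\begin{equation*}
p(Y,\Omega,\Theta|X) = p(Y|\Omega,\Theta,X)\, p(\Omega,\Theta|X).
\end{equation*}
Then I would factor the second factor as $p(\Omega,\Theta|X)=p(\Omega|\Theta,X)\,p(\Theta|X)$ and invoke A.1 twice, obtaining $p(\Omega|\Theta,X)=p(\Omega)$ and $p(\Theta|X)=p(\Theta)$, so that $p(\Omega,\Theta|X)=p(\Omega)\,p(\Theta)$. This gives the clean factorization
\begin{equation*}
p(Y,\Omega,\Theta|X) = p(Y|\Omega,\Theta,X)\, p(\Theta)\, p(\Omega).
\end{equation*}

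Next, I would invoke A.2 to rewrite $p(\Theta) = \prod_{i=1}^{M} p(\theta_i)$. Taking the negative logarithm of both sides and using the identifications in \eqref{eq:single_def1} together with $r(\Theta)=\sum_{i=1}^M r(\theta_i)$ (absorbing $\lambda_\theta$ into the definition of $r(\theta_i)$, i.e. setting $\lambda_\theta=1$ in this probabilistic reading), I would get
\begin{equation*}
-\log p(Y,\Omega,\Theta|X) = \ell(X,Y;\Omega,\Theta) + r(\Theta) + \mathcal{L}(\Omega) = J(X,Y;\Omega,\Theta).
\end{equation*}
Since $-\log(\cdot)$ is strictly decreasing, the minimizers of $J$ with respect to $(\Theta,\Omega)$ coincide with the maximizers of $p(Y,\Omega,\Theta|X)$, which is the claim.

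I do not expect any serious obstacle: the proof is essentially a bookkeeping exercise that matches each summand of $J$ to one factor of the joint density. The only subtle point is handling the regularizer correctly: A.2 ensures that summing $-\log p(\theta_i)$ yields $-\log p(\Theta)$, which is why the assumption on the experts being i.i.d.\ under the prior is needed rather than being a cosmetic simplification. Care should also be taken to state that the equivalence holds up to the hyper-parameter $\lambda_\theta$, which in this Bayesian interpretation is implicitly absorbed into the scale of the prior $p(\theta_i)$.
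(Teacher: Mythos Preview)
Your proof is correct and follows essentially the same route as the paper: factor $p(Y,\Omega,\Theta|X)$ via the chain rule, simplify to $p(Y|\Omega,\Theta,X)\,p(\Omega)\,p(\Theta)$ using A.1, split $p(\Theta)$ via A.2, and take the negative logarithm to recover the three summands of $J$. Your explicit remark on absorbing $\lambda_\theta$ into the prior scale is a welcome clarification that the paper leaves implicit.
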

	\begin{proof}
		By applying Bayes' theorem, we have:
		\begin{align}\label{eq:bayes1}
			\nonumber p(Y,\Omega,\Theta|X)&=p(Y,\Omega|\Theta,X)p(\Theta|X)\\
			\nonumber &=p(Y|\Omega,\Theta,X)p(\Omega|\Theta,X)p(\Theta|X)\\
			&=p(Y|\Omega,\Theta,X)p(\Omega)p(\Theta),
		\end{align}
		where the last equality is consequent to A.1. By considering its logarithm, it then results that
		\begin{equation*}
			\log p(Y,\!\Omega,\!\Theta|X)\!=\!\log p(Y|\Omega,\!\Theta,\!X)+\log p(\Omega)+\!\sum_{i=1}^{M}\!\log p(\theta_i),
		\end{equation*}
		from which our claim straightforwardly follows according to A.2 and \eqref{eq:single_def1}. \hfill $\blacksquare$
	\end{proof}
	
	\begin{remark}[Using quadratic regularization]
		If the regularizer is $r(\theta_{i})=\|\theta_{i}\|_{2}^{2}$, then the problem in \eqref{eq:optimization_problem} can be solved under the assumption of a Gaussian prior on $\theta_{i}$, \emph{i.e.,} $p(\theta_{i})=c_{\theta}e^{-\frac{\|\theta_{i}\|_{2}^{2}}{2\sigma_{\theta}^{2}}}$, with $\sigma_{\theta}=\sqrt{\frac{1}{2\lambda_{\theta}}}$. 
	\end{remark}
	\subsection{Learning an accurate combination of experts}
	Since our goal is to approximate the unknown feature/output relationship by leveraging the ensemble model in \eqref{eq:mixture_of_models}, a possible choice for the \emph{loss function} in \eqref{eq:overall_cost1} is 
	 \begin{equation}\label{eq:cost_mixture}
	 	 \ell(X,Y;\Omega,\Theta)=\sum_{t=1}^{T}\ell^{\mathrm{mix}}(y(t),\Omega(t)^{\top}\mathbf{f}(x(t);\Theta)),
	 \end{equation}
	where $\ell^{\mathrm{mix}}: \mathbb{R} \times [0,1]^{M} \times \mathbb{R}^{n_{\Theta}} \rightarrow \mathbb{R} \cup \{+\infty\}$ is a convex function penalizing the divergence between the observed outputs and the ones reconstructed by the convex combination of experts, \emph{e.g.,}
	\begin{equation}\label{eq:least_sq_mix}
		\ell^{\mathrm{mix}}(y(t),\Omega(t)\!^{\top}\!\mathbf{f}(x(t);\Theta))\!=\!c\|y(t)\!-\!\Omega(t)\!^{\top}\!\mathbf{f}(x(t);\Theta)\|_{2}^{2},
	\end{equation}
	and
	\begin{equation}
		\mathbf{f}(x(t);\Theta)=\begin{smallmatrix}
			\begin{bmatrix}
				f_{1}(x(t);\theta_{i}) & \ldots &  f_{1}(x(t);\theta_{M})
			\end{bmatrix}^{\top}
		\end{smallmatrix}.
	\end{equation}
	Based on Proposition~\ref{prop:1}, we can provide a statistical interpretation of the cost \eqref{eq:overall_cost1} implemented with this fitting loss, by introducing the additional assumption:
	\begin{itemize}
		\item[A.3.] the likelihood $p(\mathbf{Y}|\Omega,\Theta,\mathbf{X})$ is given by\footnote{We denote with $p(y(t)|\Omega,\Theta,\mathbf{X})=p(\mathbf{Y}=y(t)|\Omega,\Theta,\mathbf{X})$ and $p(y(t)|\Omega(t),\Theta,x(t))=p(\mathbf{Y}=y(t)|\Omega=\Omega(t),\Theta,\mathbf{X}=x(t))$.}
		\begin{align*}
			p(\mathbf{Y}|\Omega,\Theta,\mathbf{X})&=\prod_{t=1}^{T}p(y(t)|\Omega,\Theta,\mathbf{X})\\
			&=\prod_{t=1}^{T}p(y(t)|\Omega(t),\Theta,x(t)),
		\end{align*} 		
		with $p(y(t)|\Omega(t),\Theta,x(t))$ corresponding to the likelihood of the single output sample $y(t)$ given the feature vector $x(t)$, the weights $\Omega(t)$ and the experts, characterized by $\Theta$.
	\end{itemize}
	\begin{proposition}[Statistical characterization of \eqref{eq:cost_mixture}]\label{prop:2}
		Let A.1-A.3 be satisfied and let
			\begin{align}
				\nonumber &\ell^{\mathrm{mix}}(y(t),\Omega(t)^{\!\top}\mathbf{f}(x(t);\Theta))\!=\!-\log  p(y(t)|\Omega(t),\Theta,x(t))\\
				&\qquad \qquad \qquad \quad \!=\!-\log p(y(t)|\Omega(t)^{\!\top}\mathbf{f}(x(t);\Theta)).
			\end{align}
		Then, solving \eqref{eq:optimization_problem} with the loss defined in \eqref{eq:single_def1} and $\ell(X,Y;\Omega,\Theta)$ as in \eqref{eq:cost_mixture} implies maximizing the likelihood of $y(t)$ being described by the convex combination in \eqref{eq:mixture_of_models}. 
	\end{proposition}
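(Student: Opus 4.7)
The plan is to chain Proposition~\ref{prop:1} with the additional structure imposed by assumption A.3 and by the specific choice of $\ell^{\mathrm{mix}}$, so that the global cost reduces to the negative log-likelihood evaluated pointwise at the convex combination $\Omega(t)^{\top}\mathbf{f}(x(t);\Theta)$. Since Proposition~\ref{prop:1} already guarantees that minimizing $J(X,Y;\Omega,\Theta)$ under A.1--A.2 coincides with maximizing the joint likelihood $p(Y,\Omega,\Theta|X)$, the residual work is essentially to rewrite the data-fit term $\ell(X,Y;\Omega,\Theta)$ so that it matches a per-sample negative log-likelihood at the mixture prediction.

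The first step is to invoke A.3 to factorize $p(\mathbf{Y}|\Omega,\Theta,\mathbf{X})$ as the product $\prod_{t=1}^{T} p(y(t)|\Omega(t),\Theta,x(t))$. Taking the logarithm turns this product into a sum, so that
\begin{equation*}
-\log p(\mathbf{Y}|\Omega,\Theta,\mathbf{X}) = \sum_{t=1}^{T} \bigl[-\log p(y(t)|\Omega(t),\Theta,x(t))\bigr],
\end{equation*}
which aligns term-by-term with the summation structure of $\ell(X,Y;\Omega,\Theta)$ in \eqref{eq:cost_mixture}.

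Next, I would substitute the definition of $\ell^{\mathrm{mix}}$ stated in the proposition into \eqref{eq:cost_mixture}, so that each summand $\ell^{\mathrm{mix}}(y(t),\Omega(t)^{\top}\mathbf{f}(x(t);\Theta))$ is identified with $-\log p(y(t)|\Omega(t)^{\top}\mathbf{f}(x(t);\Theta))$. This identification uses the fact, built into the statement, that the likelihood of $y(t)$ given $(\Omega(t),\Theta,x(t))$ depends on these quantities only through the scalar mixture prediction $\Omega(t)^{\top}\mathbf{f}(x(t);\Theta)$; i.e. the experts and weights enter the likelihood exclusively via the ensemble output. Combining this with the factorization from A.3 yields $\ell(X,Y;\Omega,\Theta) = -\log p(\mathbf{Y}|\Omega,\Theta,\mathbf{X})$.

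Finally, I would close the argument by plugging this expression back into the objective $J$ and re-applying the reasoning of Proposition~\ref{prop:1}: the minimizer of $J$ coincides with the maximizer of $p(Y,\Omega,\Theta|X)$, which now admits the interpretation of being the joint posterior with per-sample likelihoods evaluated at the convex combination of experts. The main (and indeed only) obstacle I anticipate is making rigorous the step in which $p(y(t)|\Omega(t),\Theta,x(t))$ is shown to depend on its conditioning variables only through $\Omega(t)^{\top}\mathbf{f}(x(t);\Theta)$; this is really a modeling stipulation implicit in the statement of the proposition, and once it is made explicit the rest is a direct bookkeeping exercise on logarithms and sums.
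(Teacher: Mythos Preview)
Your proposal is correct and follows essentially the same route as the paper: start from the Bayes decomposition $p(Y,\Omega,\Theta|X)=p(Y|\Omega,\Theta,X)p(\Omega)p(\Theta)$ established in Proposition~\ref{prop:1}, use A.3 to factorize the likelihood over $t$, take logarithms, and identify each summand with $\ell^{\mathrm{mix}}$ via the definition in the statement. The paper's proof is simply a terser version of exactly this bookkeeping.
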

	\begin{proof}
		Consider \eqref{eq:bayes1}, which can be further decomposed as:
		\begin{equation*}
			 p(Y,\Omega,\Theta|X)=p(Y|\Omega,\Theta,X)p(\Omega)p(\Theta)
		\end{equation*}
		according to A.3. Then, it can be easily shown that
		\begin{align*}
			\log p(Y,\Omega,\Theta|X)\!=&\!\sum_{t=1}^{T} \log p(y(t)|\Omega(t)^{\!\top}\mathbf{f}(x(t);\Theta))+\\
			&\qquad +\log p(\Omega)+\sum_{i=1}^{M}\log p(\theta_{i}),
		\end{align*}
		based on which the proof straightforwardly follows. \hfill $\blacksquare$
	\end{proof}

Note that, this conventional choice for the loss function \cite{esen2012twenty} focuses solely on promoting the accuracy of the combination of local experts, neglecting the quality of their standalone predictions. This approach may lead to local models that lack practical value by not accurately describing the realistic behavior of the data-generating system. The limitation becomes particularly notable when the experts are designed to incorporate different priors on the data-generating system.
	
	\begin{remark}[An interpretation of \eqref{eq:least_sq_mix}]
		When the loss function $\ell(X,Y;\Omega,\Theta)$ in \eqref{eq:overall_cost1} is set to \eqref{eq:least_sq_mix}, this implies assuming a probabilistic model for the output $y(t) \sim \mathcal{N}\left(\Omega(t)^{\!\top}\mathbf{f}(x(t);\Theta),\sigma_{y}^{2}\right)$, with $\sigma_{y}=\sqrt{\frac{1}{2c}}$.
	\end{remark}
	
	\subsection{Learning trustworthy experts}
	By focusing on learning local experts that reliably describe the behavior of the true system at different operating regimes, one can consider a \emph{loss function} \eqref{eq:overall_cost1} defined as:
	 \begin{equation}\label{eq:cost_single}
		 \ell(X,Y;\Omega,\Theta)\!=\sum_{t=1}^{T}\sum_{i=1}^{M}\omega_{i}(t)\ell_i^{\mathrm{loc}}(x(t),y(t);\theta_{i}),
	\end{equation}
	where $\ell_{i}^{\mathrm{loc}}:\mathbb{R} \times \mathbb{R}^{n_{\theta_{i}}} \rightarrow \mathbb{R} \cup \{+\infty\}$, with $i=1,\ldots,M$, are (possibly different) convex functions weighting the fitting of local experts to the training data, \emph{e.g.,}
	\begin{equation}\label{eq:least_sq_loc}
		\ell^{\mathrm{loc}}_i(x(t),y(t);\theta_{i})=c_{i}\|y(t)-f_{i}(x(t);\theta_i)\|_{2}^{2},
	\end{equation}
	when considering a least squares local loss. To provide a statistical interpretation of this loss let us drop Assumption~A.3 and suppose that 
	\begin{itemize}
		\item[A.4.] the likelihood $p(Y|\Theta,\Omega,X)$ is equal to
		\begin{align*}
			p(Y|\Theta,\Omega,X)&=p(Y|\Theta,X);
		\end{align*}
		\item[A.5.] the likelihood $p(Y|\Omega,\Theta,X)$ satisfies 
			\begin{equation*}
			p(Y|\Theta,X)=\prod_{t=1}^{T}\prod_{i=1}^{M}p(y(t)|\theta_i,x(t)),
		\end{equation*} 		
		where $p(y(t)|\theta_{i},x(t))$ is the likelihood of $y(t)$, given the feature $x(t)$ and the $i$-th expert.
	\end{itemize}
	We can then provide the following probabilistic interpretation of \eqref{eq:cost_single}.
	\begin{proposition}[Statistical characterization of \eqref{eq:cost_single}]\label{prop:3}
		Assume that A.1-A.2 and A.4-A.5 are satisfied and define the local loss $\ell_{i}^{\mathrm{loc}}(x(t),y(t);\theta_{i})$ as the weighted log-likelihood
		\begin{equation}\label{eq:assum_local}
			\ell_{i}^{\mathrm{loc}}(x(t),y(t);\theta_{i})=-\frac{1}{\omega_{i}}\log p(y(t)|\theta_i,x(t)).
		\end{equation}
		Then, minimizing $J(X,Y;\Omega,\Theta)$ given by \eqref{eq:overall_cost1} and \eqref{eq:cost_single} is equivalent to maximizing the joint probability $p(Y,\Omega,\Theta|X)$, looking at the weighted likelihood of $y(t)$ being described by each local expert.
	\end{proposition}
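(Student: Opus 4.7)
The plan is to mirror the argument in Proposition~\ref{prop:1}/Proposition~\ref{prop:2}: start from $p(Y,\Omega,\Theta|X)$, apply Bayes' rule together with the new independence/likelihood assumptions to isolate three factors (an experts-term, a prior on $\Theta$, a prior on $\Omega$), take logarithms, and verify that the resulting expression coincides, up to a sign, with $J(X,Y;\Omega,\Theta)$ built from \eqref{eq:cost_single} with the local loss prescribed by \eqref{eq:assum_local}.

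Concretely, I would first write
\begin{equation*}
p(Y,\Omega,\Theta|X)=p(Y|\Omega,\Theta,X)\,p(\Omega|\Theta,X)\,p(\Theta|X),
\end{equation*}
which by A.1 collapses to $p(Y|\Omega,\Theta,X)\,p(\Omega)\,p(\Theta)$. Assumption A.4 then removes the dependence of the first factor on $\Omega$, giving $p(Y|\Theta,X)$, and A.5 factorizes this over both time and experts as $\prod_{t=1}^{T}\prod_{i=1}^{M} p(y(t)|\theta_i,x(t))$. Finally A.2 gives $p(\Theta)=\prod_{i=1}^{M} p(\theta_i)$, so
\begin{equation*}
\log p(Y,\Omega,\Theta|X)=\sum_{t=1}^{T}\sum_{i=1}^{M}\log p(y(t)|\theta_i,x(t))+\log p(\Omega)+\sum_{i=1}^{M}\log p(\theta_i).
\end{equation*}

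Next, I would substitute \eqref{eq:assum_local} into \eqref{eq:cost_single}, exploiting the crucial cancellation
\begin{equation*}
\omega_i(t)\,\ell_i^{\mathrm{loc}}(x(t),y(t);\theta_i)=-\log p(y(t)|\theta_i,x(t)),
\end{equation*}
so that the fitting objective becomes
\begin{equation*}
J(X,Y;\Omega,\Theta)=-\sum_{t=1}^{T}\sum_{i=1}^{M}\log p(y(t)|\theta_i,x(t))+r(\Theta)+\mathcal{L}(\Omega),
\end{equation*}
which, in view of \eqref{eq:single_def1}, is exactly $-\log p(Y,\Omega,\Theta|X)$. Minimizing $J$ is therefore equivalent to maximizing the joint likelihood, and the appearance of the sum over $i$ of the single-expert log-likelihoods makes the ``weighted-by-expert'' interpretation explicit.

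The only real subtlety, and the step I would flag as the main potential obstacle, is the well-posedness of \eqref{eq:assum_local} when some $\omega_i(t)$ vanishes: the factor $1/\omega_i$ is singular, but it is compensated precisely by the $\omega_i(t)$ appearing in \eqref{eq:cost_single}, so the product should be read as its limiting value (equivalently, adopt the convention $0\cdot(+\infty)=0$ or restrict attention to the interior of the simplex and extend by continuity). I would add a brief justification of this cancellation, since otherwise the substitution step above is not strictly algebraic. Everything else reduces to bookkeeping, entirely parallel to the proof of Proposition~\ref{prop:1}.
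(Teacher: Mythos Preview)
Your argument is correct and follows essentially the same route as the paper: Bayes' rule plus A.1 and A.4 to reach $p(Y|\Theta,X)p(\Omega)p(\Theta)$, then A.5 and A.2 to factorize, take logs, and identify with $-J$ via \eqref{eq:assum_local}. Your explicit handling of the $\omega_i(t)\cdot\frac{1}{\omega_i}$ cancellation (and the degenerate case $\omega_i(t)=0$) is actually more careful than the paper's own proof, which leaves this implicit.
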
	
	\begin{proof}
		Based on A.4, \eqref{eq:bayes1} is equal to
		\begin{equation*}
			p(Y,\Omega,\Theta|X)=p(Y|\Theta,X)p(\Omega)p(\Theta),
		\end{equation*}
		whose logarithm corresponds to
		\begin{align*}
			&\log p(Y,\Omega,\Theta|X)\!=\log p(\Omega)+\\
			&\qquad \qquad \quad +\!\!\sum_{t=1}^{T}\!\left[\sum_{i=1}^{M}\log p(y(t)|\theta_i,\!x(t))\!+\!\log p(\theta_i)\right]\!\!,
		\end{align*}
		according to A.5. Our claim straightforwardly follows from the definition in \eqref{eq:assum_local}.
		\hfill $\blacksquare$
	\end{proof}
 While this loss function, already introduced in \cite{esen2012twenty}, explicitly considers the quality of local models, it overlooks the accuracy of their combination on its own. Nonetheless, this combination can be crucial for approximating the true data-generating system when the blending of different experts is needed. Indeed, relying solely on this loss function may hinder the ensemble model in \eqref{eq:mixture_of_models} from generating trustworthy predictions of the overall system's behavior.
	
	\begin{remark}[An interpretation of \eqref{eq:least_sq_loc}]
		Let $\ell(X,Y;\Omega,\Theta)$ be set as in \eqref{eq:cost_single}, with the local losses chosen as \eqref{eq:least_sq_mix}. Then, for each of the experts, we assume a Gaussian probabilistic model for the output, \emph{i.e.,} $y(t) \sim \mathcal{N}\left(f_{i}(x(t);\theta_i),\sigma_{y,i}^{2}\right)$, with $\sigma_{y,i}=\sqrt{\frac{1}{2c_{i}}}$. The trustworthiness of this assumption is then accounted for via the weight $\Omega(t)$.
	\end{remark}
 
	\subsection{Blending experts fidelity and ensemble accuracy}
	For the learned convex combination of experts \eqref{eq:mixture_of_models} to be accurate and the sub-models to be valid standalone descriptors of the local behavior of \eqref{eq:true_system}, the loss functions in \eqref{eq:cost_mixture} and \eqref{eq:cost_single} can be combined into:
 \begin{subequations}\label{eq:models_fit}
	\begin{align}
	    \ell(X,Y;\Omega,\Theta)=\sum_{t=1}^{T} \ell_{t}(x(t),y(t);\Omega(t),\Theta),
	\end{align}	
        where
        \begin{align}
        \nonumber &\ell_{t}(x(t),y(t);\Omega(t),\Theta)=\ell^{\mathrm{mix}}(y(t),\Omega(t)^{\top}\mathbf{f}(x(t);\Theta))\\
        &\qquad \qquad  +\beta \sum_{i=1}^{M}\omega_{i}(t)\ell_i^{\mathrm{loc}}(x(t),y(t);\theta_{i}),
        \end{align}
        \end{subequations}
	and $\beta>0$ is a tunable weight, trading off between the \emph{cooperativeness} and the \emph{competitiveness} of the local experts. Indeed, the first term promotes their collaboration towards an accurate combination, while the second term steers them to \emph{compete} towards being standalone descriptors of the system's behavior at each instant. According to the probabilistic interpretations provided in Proposition~\ref{prop:2}-\ref{prop:3}, $\beta$ can be interpreted as a measure of the relative \textquotedblleft trustworthiness\textquotedblright of assumptions A.3 and A.5. Accordingly, by exploiting the loss in \eqref{eq:models_fit}, the learning problem becomes \emph{multi-objective}.
	
	\subsection{Shaping the combination of experts}
	The weight shaper $\mathcal{L}(\Omega)$ can be selected to enhance the interpretability of the combination of experts and to \emph{explicitly} account for the \emph{temporal ordering} of data. A possible approach to make this possible is to introduce a shaper that discourages abrupt changes in the weights over consecutive time steps, thus enforcing changes in the \textquotedblleft active\textquotedblright \ experts to be as smooth as possible.\\  
    To 
    achieve this goal, the shaper can specifically be designed as follows:
	\begin{subequations}
	\begin{equation}\label{eq:alphatrans_cost}
		\mathcal{L}(\Omega)=\sum_{t=2}^{T}\mathcal{L}^{\mathrm{trans}}(\Omega(t),\Omega(t-1))
	\end{equation}		
	where 
	\begin{equation}
		\mathcal{L}^{\mathrm{trans}}(\Omega(t),\Omega(t-1))=\eta\|\Omega(t) -\Omega(t-1) \|_{2}^{2},
	\end{equation}
	\end{subequations}
	that tends to steer consecutive vectors of weights to be similar, according to the tunable penalty $\eta>0$. With an eye on practical applications, this specific shaper accounts for the fact that unceasing switches from one operating regime to another are seldom compliant with a real physical system. Meanwhile, we can provide a probabilistic interpretation of this specific shaper based on 
	 the Markov property holds  for $\Omega$, namely
	\begin{itemize}
		\item[A.6.] the probability of $\Omega(t)$ given the past weights $\Omega(1)$, $\Omega(2),\ldots,\Omega(t-1)$ is $p(\Omega(t)|\Omega(t-1))$.
	\end{itemize}
	and on the fact that
	\begin{itemize}
		\item[A.7.] the conditioned probability $p(\Omega(t)|\Omega(t-1))$ satisfies
		\begin{equation*}
			p(\Omega(t)|\Omega(t-1))=\mathcal{N}(\Omega(t-1),\sigma_{\Omega}^{2}I),~~\forall t=2,\ldots,T,
		\end{equation*}
		with $\sigma_{\Omega}^{2}=\sqrt{\frac{1}{2\tau}}$;
		\item[A.8.] the initial set of weights $\Omega(t)$ has a uniform probability, \emph{i.e.,} $p(\Omega(1))=I$.
	\end{itemize}
	
	\begin{proposition}[Statistical view on \eqref{eq:alphatrans_cost}]
		Under A.1-A.2, minimizing $J(X,Y;\Omega,\Theta)$ in \eqref{eq:overall_cost1} with the weight shaper in \eqref{eq:alphatrans_cost} corresponds to the maximization of $p(Y,\Omega,\Theta|X)$ based on assumptions A.8-A.10.
	\end{proposition}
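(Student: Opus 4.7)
The plan is to build directly on Proposition~\ref{prop:1}: once A.1--A.2 hold, that proposition already reduces the joint-likelihood maximization to minimizing $J(X,Y;\Omega,\Theta)$ in \eqref{eq:overall_cost1} under the identifications in \eqref{eq:single_def1}. The loss term and the regularizer are inherited from the probabilistic interpretations already discussed, so the only remaining task is to verify that the specific shaper \eqref{eq:alphatrans_cost} coincides with $-\log p(\Omega)$ up to an additive constant that does not depend on the optimization variables, under A.6--A.8.

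First I would invoke the Markov property A.6 to factor
\begin{equation*}
p(\Omega)=p(\Omega(1))\prod_{t=2}^{T}p(\Omega(t)|\Omega(t-1)).
\end{equation*}
By A.8, $-\log p(\Omega(1))$ contributes only a constant, since the uniform prior does not depend on the particular realization of $\Omega(1)$ once the feasibility set $\mathcal{F}$ in \eqref{eq:feasibility_set} is fixed. Applying A.7 to each transition factor yields
\begin{equation*}
-\log p(\Omega(t)|\Omega(t-1)) = \tau\,\|\Omega(t)-\Omega(t-1)\|_{2}^{2} + c_{0},
\end{equation*}
with $c_{0}$ the Gaussian normalization constant. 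Summing over $t=2,\ldots,T$ then gives
\begin{equation*}
-\log p(\Omega) = \tau\sum_{t=2}^{T}\|\Omega(t)-\Omega(t-1)\|_{2}^{2} + c_{1},
\end{equation*}
which matches \eqref{eq:alphatrans_cost} after identifying $\eta$ with $\tau$ and absorbing $c_{1}$ into an additive constant irrelevant to the $\mathrm{argmin}$.

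Combining this identification with those already pinned down in \eqref{eq:single_def1}, the three summands composing $J(X,Y;\Omega,\Theta)$ reproduce $-\log p(Y,\Omega,\Theta|X)$ up to a global additive constant, so by Proposition~\ref{prop:1} minimizing $J$ is equivalent to maximizing the joint likelihood, which is the claim.

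The step I expect to be most delicate is the reconciliation of A.7--A.8 with the constraint $\Omega\in\mathcal{F}$: strictly speaking, the Gaussian transition in A.7 is defined on $\mathbb{R}^{M}$ and the uniform prior in A.8 needs to be interpreted on the bounded simplex. The clean resolution is to treat these densities as formal (possibly truncated or improper) priors and observe that the feasibility set is already enforced by the optimization in \eqref{eq:optimization_problem}, so the truncation normalizers are independent of $\Theta$ and $\Omega$ and harmlessly absorbed into the additive constants; this leaves the equivalence of minima and maxima intact.
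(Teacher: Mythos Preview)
Your argument is correct and follows essentially the same route as the paper: factor $p(\Omega)$ via the Markov property, use the Gaussian transition density to identify $-\log p(\Omega)$ with the quadratic shaper \eqref{eq:alphatrans_cost} up to an additive constant, and then appeal to the Bayes decomposition underlying Proposition~\ref{prop:1}. Your explicit remark on reconciling the Gaussian/uniform priors with the simplex constraint $\mathcal{F}$ is a nice addition that the paper leaves implicit.
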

	\begin{proof}
		According to A.8-A.10, the following holds
		\begin{align*}
			p(\Omega)&=p(\Omega(1),\Omega(2),\ldots,\Omega(T))=\\
			&=\prod_{t=2}^{T}p(\Omega(t)|\Omega(t-1),\ldots,\Omega(1))p(\Omega(1))\\
			&=\prod_{t=2}^{T}p(\Omega(t)|\Omega(t-1))=c_{\Omega}\prod_{t=2}^{T}e^{-\frac{\|\Omega(t)-\Omega(t-1)\|_{2}^{2}}{2\sigma_{\Omega}^{2}}},
		\end{align*}
		whose logarithm is 
		\begin{equation*}
			\log p(\Omega)=-\eta \sum_{t=2}^{T} \|\Omega(t-1)-\Omega(t)\|_{2}^{2}+\log c_\Omega.
		\end{equation*}
		 Since $c_{\Omega}=(2\pi)^{-M/2}\sigma_{\Omega}^{-1/2}$, as thus the last term is negligible when solving \eqref{eq:optimization_problem} with $\mathcal{L}(\Omega)$ defined as in \eqref{eq:alphatrans_cost}, then the proof straightforwardly follows from \eqref{eq:bayes1}.\hfill $\blacksquare$
	\end{proof}
	
	

	\section{Fitting experts combinations: an alternated approach}\label{Sec:methodology}		
		 \begin{algorithm}[!tb]
		\caption{Fitting a convex combination of experts}
		\label{algo1}
		~~\textbf{Input}: training data $X$, $Y$; maximum number of experts $M$; initial weights $\Omega^{1} \in \mathcal{F}$; parameters $\beta, \lambda_{\theta},\tau \geq 0$.
		\vspace*{.1cm}
		\hrule\vspace*{.1cm}
		\begin{enumerate}[label=\arabic*., ref=\theenumi{}]
			\item \textbf{for} $k=1,2,\ldots$ \textbf{do}
			\begin{enumerate}[label=\theenumi{}.\arabic*, ref=\theenumi{}.\arabic*]
				\item\label{step:model_fitting} $\Theta^{k+1} \leftarrow \underset{\Theta}{\mathrm{argmin}}~\ell(X,Y;\Omega^{k},\Theta)+r(\Theta);$
				\item \label{step:weight_fitting} $\Omega^{k+1} \leftarrow \underset{\Omega \in  \mathcal{F}}{\mathrm{argmin}}~\ell(X,Y;\Omega,\Theta^{k+1})+\mathcal{L}(\Omega);$
			\end{enumerate}
			\item \textbf{until termination}
		\end{enumerate}
		\vspace*{.1cm}\hrule\vspace*{.1cm}
		~~\textbf{Output}: estimated experts $\Theta^{\star}=\Theta^{k+1}$ and weights $\Omega^{\star}=\Omega^{k+1}$.
	\end{algorithm}	
	The approach we propose is to jointly learn the experts' parameters $\Theta$ and the weights $\Omega \in \mathcal{F}$ from a given set of training data $X,~Y$, ultimately solving \eqref{eq:optimization_problem}, is the coordinate descent method summarized in Algorithm~\ref{algo1}. By alternatively keeping one of the two optimization variables fixed, the proposed approach allows one to learn the local experts by solving an \emph{unconstrained} optimization problem (see step~\ref{step:model_fitting}), with the feasible weight set $\mathcal{F}$ constraining the optimization at step~\ref{step:weight_fitting} of Algorithm~\ref{algo1} only. By following the same steps of \cite{bemporad2022}, we are capable of providing some results on the convergence of the proposed approach \VB{when the losses are non-decreasing and both them and the experts are convex}. To this end, let us introduce the following definition \cite{bemporad2022}.
	\begin{definition}[Partial optimal solution for \eqref{eq:optimization_problem}]\label{def1}
		A pair $(\Theta^{\star},\Omega^{\star}) \in \mathbb{R}^{n_{\Theta}} \times \mathcal{F}$ is a partial optimal solution for \eqref{eq:optimization_problem} given that it satisfies the following
		\begin{align}
			& J(X,Y;\Omega^{\star},\Theta^{\star}) \leq J(X,Y;\Omega,\Theta^{\star}), \forall \Omega \in \mathcal{F},\\
			&J(X,Y;\Omega^{\star},\Theta^{\star}) \leq J(X,Y;\Omega^{\star},\Theta), \forall \Theta \in \mathbb{R}^{n_{\Theta}}.
		\end{align}
	\end{definition}
	Moreover, let us assume the following.
	\begin{assumption}[On the successive solutions of step~\ref{step:weight_fitting}]\label{assu:1}
		For $\Theta^{k+1}=\Theta^{k}$ there exists only one weight sequence $\Omega^{k+1} \in \mathcal{F}$ such that
		\begin{equation*}
			\Omega^{k+1} \leftarrow \arg\min_{\Omega \in \mathcal{F}}~~J(X,Y;\Omega,\Theta^{k+1}),
		\end{equation*}
		and that sequence satisfies $\Omega^{k+1}=\Omega^{k}$. 
	\end{assumption}
	Note that this assumption entails that only one set of weights can be associated to a given set of experts' parameters and, thus, that only one of their combinations leads to a minimal loss when they are fixed. Accordingly, it holds that exists a unique minimum of $J(X,Y;\Theta,\Omega)$ for fixed $\Theta$. 
	We can now formalize the following result on the convergence of Algorithm~\ref{algo1}. 
	\begin{theorem}[On the convergence of Algorithm~\ref{algo1}]
		Let the local experts in \eqref{eq:mixture_of_models} be all convex. Moreover, let the loss function $\ell(X,Y;\Omega,\Theta)$, the group regularizer $r(\Theta)$ and the weight shaper $\mathcal{L}(\Omega)$ be convex, non-negative and \VB{non-decreasing} functions of the data and the optimization variables. Additionally, suppose that the optimization problem to be solved at step~\ref{step:weight_fitting} of Algorithm~\ref{algo1} is feasible at each iteration. Then, under Assumption~\ref{assu:1}, Algorithm~\ref{algo1} converges to a partial optimal solution of \eqref{eq:optimization_problem} in a finite number of steps.
	\end{theorem}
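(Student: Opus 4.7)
The plan is to adapt the block coordinate descent convergence argument of \cite{bemporad2022}, which is tailored to the jump-model setting that Lemma~\ref{lemma:jump_models} identifies as a special case of our framework. The starting point is to verify that, iteration by iteration, the composite objective $J(X,Y;\Omega^{k},\Theta^{k})$ forms a monotonically non-increasing sequence. By construction of step~\ref{step:model_fitting}, the minimization over $\Theta$ at fixed $\Omega^{k}$ yields $\ell(X,Y;\Omega^{k},\Theta^{k+1})+r(\Theta^{k+1}) \leq \ell(X,Y;\Omega^{k},\Theta^{k})+r(\Theta^{k})$, which gives $J(X,Y;\Omega^{k},\Theta^{k+1}) \leq J(X,Y;\Omega^{k},\Theta^{k})$ since $\mathcal{L}(\Omega^{k})$ is unchanged. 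An analogous argument applied to step~\ref{step:weight_fitting} yields $J(X,Y;\Omega^{k+1},\Theta^{k+1}) \leq J(X,Y;\Omega^{k},\Theta^{k+1})$, so chaining the two inequalities produces the desired monotonicity.

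Next I would combine monotonicity with the non-negativity hypothesis on $\ell$, $r$, and $\mathcal{L}$ to conclude that $\{J(X,Y;\Omega^{k},\Theta^{k})\}_{k\geq 1}$ is a bounded-below monotone sequence, hence convergent. Convexity of the experts and of each term in $J$ ensures that the minimizers in steps~\ref{step:model_fitting}--\ref{step:weight_fitting} are well defined and that each one-block update is a best response. I would then invoke the stopping mechanism used in \cite{bemporad2022}: if at some iteration $k$ no strict decrease occurs, i.e.\ $J(X,Y;\Omega^{k+1},\Theta^{k+1})=J(X,Y;\Omega^{k},\Theta^{k})$, then step~\ref{step:weight_fitting} has produced $\Omega^{k+1}$ with the same cost as $\Omega^{k}$ at the fixed parameter $\Theta^{k+1}$, so Assumption~\ref{assu:1} forces $\Omega^{k+1}=\Omega^{k}$, and the analogous uniqueness consequence for the $\Theta$-update (from strict convexity of $r$ combined with convexity of $\ell$ in $\Theta$) forces $\Theta^{k+1}=\Theta^{k}$.

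The final step is to verify that any such fixed point $(\Theta^{\star},\Omega^{\star})$ satisfies Definition~\ref{def1}. By construction $\Theta^{\star}$ minimizes $J(X,Y;\Omega^{\star},\cdot)$ over $\mathbb{R}^{n_{\Theta}}$ (step~\ref{step:model_fitting}) and $\Omega^{\star}$ minimizes $J(X,Y;\cdot,\Theta^{\star})$ over $\mathcal{F}$ (step~\ref{step:weight_fitting}), which are precisely the two inequalities required of a partial optimal solution. The main obstacle in executing this plan is establishing \emph{finite} (rather than merely asymptotic) termination in the present continuous-weight setting: unlike the discrete mode sequences of \cite{bemporad2018fitting,bemporad2022}, the feasible set $\mathcal{F}$ is not finite, so finiteness must be extracted from the interplay between Assumption~\ref{assu:1}, the non-decreasing monotonicity hypothesis on the constituent losses, and the convergence of the cost sequence, arguing that a strict decrease cannot persist beyond finitely many iterations without contradicting either the lower bound or the uniqueness statement; this is exactly the step I expect to require the most care when the sketch is turned into a complete proof.
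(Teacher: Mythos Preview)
Your proposal follows essentially the same block--coordinate--descent argument as the paper: establish monotone non-increase of $J$ across the two sub-steps, combine with non-negativity to get a convergent cost sequence, and then use Assumption~\ref{assu:1} to upgrade asymptotic convergence to finite termination at a partial optimum in the sense of Definition~\ref{def1}.

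Two small points of divergence are worth flagging. First, you appeal to ``strict convexity of $r$'' to pin down uniqueness of the $\Theta$-update, but strict convexity is not among the hypotheses; the paper does not argue uniqueness of $\Theta^{k+1}$ at all and relies solely on Assumption~\ref{assu:1} (which concerns the $\Omega$-minimizer when $\Theta^{k+1}=\Theta^{k}$) to rule out ``chattering'' between distinct weight sequences. Second, your honest caveat about finite termination in a continuous feasible set is well placed: the paper's own argument at this step is equally brief, simply asserting that Assumption~\ref{assu:1} precludes oscillation and hence forces termination in finitely many steps, without a more detailed mechanism than the one you sketch. So you are not missing anything the paper supplies; you have identified precisely the point where the argument is terse.
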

	\begin{proof}
		The proof follows the one of \cite[Theorem 1]{bemporad2022} and it is aimed at showing that Algorithm~\ref{algo1} is a convergent block-coordinate descent algorithm \cite{beck2013convergent}. Given the fixed weights $\Omega=\Omega^{k}$, \eqref{eq:optimization_problem} becomes 
		\begin{equation*}
			\min_{\Theta} \ell(X;Y,\Omega^{k},\Theta)+r(\Theta),
		\end{equation*}
		 which is a convex, unconstrained optimization problem in $\Theta$, since the cost is a combination of convex functions. Therefore, this problem can be solved up to global optimality and 
 	 	\begin{equation*}
		 	\Theta^{k+1} \leftarrow \arg\min_{\Theta}  \ell(X;Y,\Omega^{k},\Theta)+r(\Theta),
		 \end{equation*}
		 satisfies $J(X,Y;\Omega^{k},\Theta^{k+1}) \!\leq\! J(X,Y;\Omega^{k},\Theta)$, $\forall \Theta \in \mathbb{R}^{n_{\Theta}}$. Analogously, when $\Theta=\Theta^{k+1}$, then  \eqref{eq:optimization_problem} becomes 
		 \begin{equation*}
		 	\min_{\Omega \in \mathcal{F}}~~J(X,Y;\Omega^{k},\Theta),
		 \end{equation*}  
	 	which is a convex optimization problem with convex constraints in $\Omega$, which is assumed to be always feasible and solved to global optimality. As a consequence, the updated sequence
	 	\begin{equation*}
	 		\Omega^{k+1} \leftarrow \arg\min_{\Omega \in \mathcal{F}}~~J(X,Y;\Omega,\Theta^{k+1}),
	 	\end{equation*}
 		satisfies
 		\begin{subequations}
 		\begin{align}
 			& J(X,Y;\Omega^{k+1},\Theta^{k+1}) \leq J(X,Y;\Omega^{k},\Theta^{k+1}),\\
 			& J(X,Y;\Omega^{k},\Theta^{k+1}) \leq J(X,Y;\Omega^{k},\Theta^{k}), 
 		\end{align}
 		\end{subequations}
 		In turn, this implies that $J(X,Y;\Omega,\Theta)$ is monotonically non-increasing at each iteration of Algorithm~\ref{algo1}, while the cost is lower bounded by zero since all its component are non-negative. Therefore, the sequence of optimal costs converges asymptotically. Moreover, thanks to Assumption~\ref{assu:1}, it also converges in a finite number of steps, since no chattering between different weights is possible, to a point that is a partial optimum of \eqref{eq:optimization_problem} according to Definition~\ref{def1} since all steps are solved to global optimality.   
	\end{proof}
	
	Note that the previous convergence result holds even of the order of the iterations is reversed. Therefore, this proof holds even if   we initialize the local experts before the weights. Moreover, this convergence result provides indications on possible termination criteria for Algorithm~\ref{algo1}. In particular, $(i)$ by introducing two (eventually equal) tolerances $\epsilon_{\Theta},\epsilon_{\Omega}>0$, Algorithm~\ref{algo1} can be stopped when
	\begin{subequations}\label{eq:termination_1}
	\begin{align}
		& \|\Theta^{k+1}-\Theta^{k}\|_{2}<\epsilon_{\Theta}, \\
		& \|\Omega^{k+1}-\Omega^{k}\|_{2}<\epsilon_{\Omega},
	\end{align}  
	\end{subequations}
	Alternatively, $(ii)$ by setting a tolerance $\epsilon_{J}>0$, the algorithm can be stopped when
	\begin{equation}\label{eq:termination_2}
		|J(X,Y;\Omega^{k+1},\Theta^{k+1})-J(X,Y;\Omega^{k},\Theta^{k})|<\epsilon_{J}.
	\end{equation}
	or it can be terminated $(iii)$ by fixing a maximum number $k_{\mathrm{max}}$ of iterations.\\	
The provided convergence result for Algorithm 1 does not ensure the attainment of a global minimum due to its sensitivity to the chosen weight initialization. It is thus recommended to explore multiple initial sequences $\Omega^{1,(i)}$ (for $i=1,\ldots,N$), run Algorithm 1 for each, and select the combination of experts yielding the smallest cost. In the presence of priors on the experts' operating regimes, $\Omega^{1}$ can be chosen accordingly, thus incorporating this prior into learning. Otherwise, the weights can be initially set randomly. 

	\section{Experts learning}\label{sec:Experts}
	Let us now focus on the optimization problem to be solved at step~\ref{step:model_fitting} of Algorithm~\ref{algo1}. When the loss function $\ell(X,Y;\Omega,\Theta)$ is chosen as in \eqref{eq:cost_single}, the fitting problem to be solved at step~\ref{step:model_fitting} becomes
	\begin{subequations}\label{eq:experts_fitting_problem}
	\begin{equation}
		\min_{\Theta} \sum_{i=1}^{M}\!\left[\sum_{t=1}^{T}\omega_{i}^{k}(t)\ell_i^{\mathrm{loc}}(x(t),\!y(t);\theta_{i})\!+\!\lambda_{\theta}r(\theta_{i})\right]\!,
	\end{equation}  
	that is separable across the experts. Therefore, local models' fitting can be carried out in parallel, by solving $M$ distinct optimization problems
	\begin{equation}\label{eq:loc_minimization}
		\min_{\theta_{i}}~\sum_{t=1}^{T}\omega_{i}^{k}(t)\ell_i^{\mathrm{loc}}(x(t),y(t);\theta_{i})\!+\!\lambda_{\theta}r(\theta_{i}),
	\end{equation}  
 	\end{subequations}
	with $i=1,\ldots,M$. When the local losses $\ell_{i}^{\mathrm{loc}}$, $i=1,\ldots,M$, and $r$ are convex functions, these optimization problems can be solved globally (up to the wanted precision) with standard convex programming tools \cite{BoydConvex}.    \\	
	Meanwhile, when either the loss defined in \eqref{eq:cost_mixture} or the one in \eqref{eq:models_fit} are exploited, the separability of the fitting problem is lost due to the features of $\ell^{\mathrm{mix}}(y(t),\Omega(t)^{\top}\mathbf{f}(x(t);\Theta))$. Specifically, 
	when the more general loss \eqref{eq:models_fit} is considered, step~\ref{step:model_fitting} of Algorithm~\ref{algo1} requires the solution of the \emph{sharing problem}
	\begin{subequations}\label{eq:sharing_formulation}
		\begin{equation}\label{eq:sharing_pb}
			\min_{\Theta}~\sum_{i=1}^{M} h_{i}(\hat{y}_{i}(\theta_i))+\sum_{t=1}^{T}g\!\left(\sum_{i=1}^{M}\omega_{i}^{k}(t)\hat{y}_{i}(t;\theta_i)\right)\!,
		\end{equation} 		
		where
		\begin{equation}\label{eq:local_estimator}
			\hat{y}_{i}(t;\theta_i)=f_{i}(x(t);\theta_{i}),
		\end{equation}
		denotes the output at time $t$ predicted by the $i$-th local expert based on $x(t)$, and
		\begin{align}
			 & h_{i}(\hat{y}_{i}(\theta_i))\!=\!\beta \sum_{t=1}^{T}\omega_{i}^{k}(t)\ell_{i}^{\mathrm{loc}}(x(t),y(t);\theta_{i})\!+\!r(\theta_{i}),\label{eq:individual}\\
			 & g\!\left(\sum_{i=1}^{M}\omega_{i}^{k}(t)\hat{y}_{i}(t;\theta_i)\!\right)\!\!=\!\ell^{\mathrm{mix}}(y(t),\!\Omega(t)\!^{\top}\mathbf{f}(x(t);\Theta)) \label{eq:shared}.
		\end{align}
	\end{subequations}
	By looking at \eqref{eq:shared}, it is clear that this term of the cost is not separable over the experts, since it is used to characterize the fitting objective that one aims at achieving by exploiting them as an ensemble. This implies that the experts cannot be trained individually and, thus, that several local models of (potentially) different complexity and nature have to be learned at once.\\	
	To overcome this issue, we propose to exploit the \emph{Alternating Direction Method of Multipliers} (ADMM) \cite{boydADMM,GABAY1983}. To this end, we introduce a set of auxiliary variables $z(t) \in \mathbb{R}^{M}$, for $t=1,\ldots,T$, allowing us to recast the problem as
	\begin{subequations}\label{eq:sharing_ADMM}
	\begin{align}
		&\min_{\Theta,Z}~\sum_{i=1}^{M} h_{i}(\hat{y}_{i}(\theta_i))+\sum_{t=1}^{T}g\!\left(\sum_{i=1}^{M}z_{i}(t)\right)\\
		\nonumber & ~~ \mbox{s.t. }z_{i}(t)\!=\!\omega_{i}^{k}(t)\hat{y}_{i}(t;\theta_i),~i\!=\!1,\ldots,M,\\
		& \qquad \qquad \qquad \qquad  \qquad ~\quad t\!=\!1,\ldots, T. \label{eq:equalities_sharing}
	\end{align}
	\end{subequations}
	Let us consider the associated augmented Lagrangian:
	\begin{align}
		\nonumber & L(\Omega^{k},\Theta,Z,U)\!=\!\!\sum_{i=1}^{M} h_{i}(\hat{y}_{i}(\theta_i))\!+\!\sum_{t=1}^{T}g\!\left(\sum_{i=1}^{M}z_{i}(t)\right)\!+\!\\
		&\qquad \quad \quad \quad ~~~~~ +\frac{\rho}{2}\sum_{t=1}^{T}\sum_{i=1}^{M}\delta_i^2(\theta_i,z_i(t),u_{i}(t)), \label{eq:Lagrangian}
	\end{align}
	with $\rho>0$ being a tunable parameter, $Z\!=\!(z(1),\ldots,z(T))$,  $u_{i}(t)$ being the (scaled) Lagrange multipliers associated with the equality constraints in \eqref{eq:equalities_sharing}, for $i=1,\ldots,M$ and $t=1,\ldots,T$ and
	\begin{equation}\label{eq:deltai_1}
		\delta_i(\theta_i,z_i(t),u_{i}(t))=\omega_{i}^{k}(t)\hat{y}_{i}(t;\theta_i)\!-\!z_{i}(t)\!+\!u_{i}(t).
	\end{equation}
	The scaled form of ADMM for problem \eqref{eq:sharing_ADMM} is:
	\begin{subequations}
	 	\begin{align}
		& \theta_{i}^{j+1} \!\!\leftarrow\! \underset{\theta_i}{\mathrm{argmin}}~ h_{i}(\hat{y}_{i}(\theta_i)\!)\!+\!\!\frac{\rho}{2}\!\sum_{t=1}^{T}\!\delta_i^2(\theta_i,\!z_i^j(t),\!u_{i}^j(t)), \label{eq:ADMM1_1}\\
		& Z^{j+1} \!\!\!\leftarrow\! \underset{Z}{\mathrm{argmin}}~L(\Omega^{k}\!,\!\Theta^{j\!+\!1}\!,Z,\!U^{j}),\label{eq:ADMM1_2}\\
		& u_{i}^{j+\!1}(t)\leftarrow u_{i}^{j}(t)\!+\!\omega_{i}^{k}(t)\hat{y}_{i}(t;\theta_i^{j+1})\!-\!z_{i}^{j+1}(t),\label{eq:ADMM1_3}
	\end{align}
	where $j=1,\ldots$ is the ADMM iteration counter, \eqref{eq:ADMM1_1} has to be carried out for all $i=1,\ldots,M$, the Lagrange multipliers have to be updated as in \eqref{eq:ADMM1_3} for all $i=1,\ldots,M$ and $t=1,\ldots,T$, and the auxiliary variables are computed via the minimization of
	\begin{equation}
		L(\Omega^{k}\!,\!\Theta^{j\!+\!1}\!,Z,\!U^{j})\!=\!\! \sum_{t=1}^{T}\!\! \left[g\!\left(\!\sum_{i=1}^{M}\!\!z_{i}(t)\!\right)\!\!+\!\frac{\rho}{2}\Delta(\Theta^{j\!+\!1}\!,Z,U^{j})\right]\!\!,
	\end{equation}
	with
	\begin{equation}
		\Delta(\theta^{j+1},Z,U^{j})=\sum_{i=1}^{M}\delta_i^2(\theta_i^{j+1},z_i(t),u_{i}^j(t)).
	\end{equation}
	\end{subequations}
	As shown in  \eqref{eq:ADMM1_1}, this approach allows for the parallelization of the experts training, thus enabling the use of tools for the optimization of the local parameters $\theta_{i}$ customized to the nature of each expert. Nonetheless, the ADMM formulation in \eqref{eq:sharing_ADMM} involves the introduction of $TM$ additional optimization variables, ultimately increasing the complexity of the learning problem to be solved. 

	An approach to overcome this limitation, involves the introduction of the \emph{weighted average}	\begin{equation}\label{eq:weighter_avgAux}
		\bar{z}(t)=\frac{1}{M}\sum_{i=1}^{M}\omega_{i}^{k}(t)z_{i}(t),
	\end{equation} 
	in place of the local auxiliary variables $\{z_{i}(t)\}_{i=1}^{M}$, for $t=1,\ldots,T$, and the corresponding Lagrange multipliers
	\begin{equation}
		\bar{u}(t)=\frac{1}{M}\sum_{i=1}^{M} u_{i}(t),~t=1,\ldots,T.
	\end{equation}
	Accordingly, by defining $\bar{Z}=(\bar{z}(1),\ldots,\bar{z}(T))$, the ADMM steps to solve \eqref{eq:sharing_formulation} become
	\begin{subequations}
		\begin{align}
			& \theta_{i}^{j+1} \!\leftarrow\!\underset{\theta_i}{\mathrm{argmin}}~ h_{i}(\hat{y}_{i}(\theta_i))\!+\frac{\rho}{2}\sum_{t=1}^{T}[\tilde{\delta}_{i}^{j}(t;\theta_i)]^{2},  \label{eq:ADMM2_1}\\
			& \bar{Z}^{j+1} \!\!\leftarrow\! \underset{\bar{Z}}{\mathrm{argmin}}\sum_{t=1}^{T}\!\left(\!g(M\bar{z}(t))\!+\!\frac{\rho M}{2}[\bar{\varepsilon}^{j}(\bar{z}(t))]^{2}\right)\!,\label{eq:ADMM2_2}\\
			& \bar{u}^{j+\!1}(t)\!\leftarrow \bar{u}^{j}(t)\!+\!\frac{\Omega^{k}(t)\mathbf{f}(x(t);\Theta^{j+1})}{M}\!-\!\bar{z}^{j+1}(t),\label{eq:ADMM2_3}
		\end{align}
	where
	\begin{equation}
		 \tilde{\delta}_{i}^{j}(t;\theta_i)\!=\!\omega_{i}^{k}(t)e_{i}^{j}(t)\!+\!\frac{\Omega^{k}(t)\mathbf{f}(x(t);\Theta^{j})}{M}\!-\!\bar{z}^{j}(t)\!+\!\bar{u}^{j}(t),
		\end{equation}
	with $e_{i}^{j}(t)=\hat{y}_{i}(t;\theta_i)-\hat{y}_{i}(t;\theta_i^{j})$, and
	\begin{equation}
		 \bar{\varepsilon}^{j}(\bar{z}(t))=\bar{z}(t)\!-\!\bar{u}^{j}(t)\!-\!\frac{1}{M}\Omega^{k}(t)\mathbf{f}(x(t);\Theta^{j+1}),
	\end{equation}
	\end{subequations}
	The reader is referred to Appendix~\ref{Appendix:B} for details on their derivation.


	
	\section{Learning \& predicting the mixture weights}\label{sec:mixtures}

    \begin{figure*}[!tb]
    \centering
    \begin{tikzpicture}
        \draw[draw=blue!5!white,fill=blue!5!white] (-.25,.5) rectangle (3.5,-.5); 
        \draw[draw=blue!10!white,fill=blue!10!white] (2.5,.5) rectangle (6.5,-.5); 
        \draw[draw=blue!20!white,fill=blue!15!white] (5.5,.5) rectangle (9.5,-.5); 
        \draw[draw=blue!30!white,fill=blue!30!white] (10.5,.5) rectangle (14.25,-.5);
        \draw[draw=blue!40!white,dashed] (-.25,.5) rectangle (3.5,-.5); 
        \draw[draw=blue!50!white,dashed] (2.5,.5) rectangle (6.5,-.5); 
        \draw[draw=blue!60!white,dashed] (5.5,.5) rectangle (9.5,-.5); 
        \draw[draw=blue!70!white,dashed] (8.5,.5) rectangle (9.75,-.5); 
        \draw[draw=white] (9.75,.5) rectangle (9.75,-.5); 
        \draw[draw=blue!80!white,dashed] (10.25,.5) rectangle (11.5,-.5);
        \draw[draw=white,dashed] (10.25,.5) rectangle (10.25,-.5);
        \draw[draw=blue!90!white,dashed] (10.5,.5) rectangle (14.25,-.5);
        
        \node[draw,circle,fill=blue!2!white] (node1W1) {};
        \node[draw,circle,fill=blue!2!white,right of=node1W1,node distance=1cm] (node2W1) {};
        \node[draw,circle,fill=blue!2!white,right of=node2W1,node distance=1cm] (node3W1) {};
        \node[draw,circle,fill=blue!2!white,pattern=north east lines,right of=node3W1,node distance=1cm] (node4W1) {};
        \node[draw,circle,fill=blue!2!white,right of=node4W1,node distance=1cm] (node2W2) {};
        \node[draw,circle,fill=blue!2!white,right of=node2W2,node distance=1cm] (node3W2) {};
        \node[draw,circle,fill=blue!2!white,pattern=north east lines,right of=node3W2,node distance=1cm] (node4W2) {};
        \node[draw,circle,fill=blue!2!white,right of=node4W2,node distance=1cm] (node2W3) {};
        \node[draw,circle,fill=blue!2!white,right of=node2W3,node distance=1cm] (node3W3) {};
        \node[draw,circle,fill=blue!2!white,pattern=north east lines,right of=node3W3,node distance=1cm] (node4W3) {};
        \node[coordinate,right of=node4W3,node distance=.5cm] (conj1) {};
        \node[coordinate,right of=conj1,node distance=1cm] (conj2) {};
        \node[draw,circle,fill=blue!2!white,pattern=north east lines,right of=conj2,node distance=.5cm] (node1Wend) {};
        \node[draw,circle,fill=blue!2!white,right of=node1Wend,node distance=1cm] (node2Wend) {};
        \node[draw,circle,fill=blue!2!white,right of=node2Wend,node distance=1cm] (node3Wend) {};
        \node[draw,circle,fill=blue!2!white,right of=node3Wend,node distance=1cm] (node4Wend) {};
        \node[coordinate,right of=node4Wend,node distance=.75cm] (endnode) {};

        \draw[-] (node1W1) -- (node2W1);
        \draw[-] (node2W1) -- (node3W1);
        \draw[-] (node3W1) -- (node4W1);
        \draw[-] (node4W1) -- (node2W2);
        \draw[-] (node2W2) -- (node3W2);
        \draw[-] (node3W2) -- (node4W2);
        \draw[-] (node4W2) -- (node2W3);
        \draw[-] (node2W3) -- (node3W3);
        \draw[-] (node3W3) -- (node4W3);
        \draw[-] (node4W3) -- (conj1);
        \draw[-,dashed] (conj1) -- (conj2);
        \draw[-] (conj2) -- (node1Wend);
        \draw[-] (node1Wend) -- (node2Wend);
        \draw[-] (node2Wend) -- (node3Wend);
        \draw[-] (node3Wend) -- (node4Wend);
        \draw[->] (node4Wend) -- node[near end,xshift=1cm]{samples}(endnode);

    \end{tikzpicture}
	\caption{Learning mixture weights: windowing strategy for $W=4$. Successive windows are depicted as rectangles of increasingly deep shades of blue, samples associated with two consecutive windows are filled with a line pattern and associated with the window dictating their weights.}
	\label{fig:windowing}
    \end{figure*}
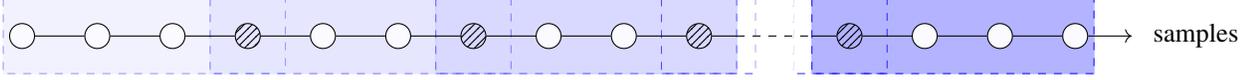

      Once the parameters of the local models have been estimated, we have to determine the weights associated with each expert for each sample of the training set. To this end, let us consider the problem to be solved at step~\ref{step:weight_fitting} of Algorithm~\ref{algo1}, \emph{i.e.,}
	\begin{equation}
		\min_{\Omega \in \mathcal{F}} \ell(X,Y;\Omega,\Theta^{k+1})+\mathcal{L}(\Omega),
	\end{equation}
	which is a convex problem in the weights. This problem is completely separable over time when $\mathcal{L}(\Omega)$ does not penalize abrupt transitions in the temporal trends of the weights, \emph{i.e.,}
	\begin{equation}
		\mathcal{L}^{\mathrm{trans}}(\Omega(t),\Omega(t-1))=0,
	\end{equation}   
 Under the assumption mentioned, the optimization problem remains computationally tractable even with large datasets. 
 However, when smooth transitions of weights are needed, the objective function becomes non-separable, increasing computational intensity, particularly with large datasets. To address this, a windowing strategy is proposed to enable weight computation in such scenarios.
 
    Instead of considering all data at once, we use successive windows of length $W$ with a single step overlap (see \figurename{~\ref{fig:windowing}} for a schematic of the windowing strategy), and learn the weights by solving
    \begin{subequations}
		\begin{equation}
			\min_{\Omega \in \mathcal{F}} \ell^{W}(X,Y;\Omega,\Theta^{k+1})+\mathcal{L}^{W}(\Omega),
		\end{equation}
	where
	\begin{align}
		&\ell^{W\!\!}(X,Y;\Omega,\Theta^{k+1})\!=\!\!\!\!\!\!\sum_{\iota=\tau}^{\tau+W-1}\!\!\!\ell_{\iota}(y(\iota),x(\iota);\Omega(\iota),\Theta^{k+1}),\\
		&\mathcal{L}^{W}(\Omega)=
  \eta\!\!\!\!\sum_{\iota=\tau+2}^{\tau+W-1}\!\!\!\mathcal{L}^{\mathrm{trans}}(\Omega(\iota),\Omega(\iota-\!1)),
	\end{align}
 \end{subequations}
        and $\tau=1,W,\ldots,T$. Note that, as shown in \figurename{~\ref{fig:windowing}}, the last weights vector estimated in each window is overwritten by the first computed for the following one.\\
    Despite its sub-optimality, this approach scales even to large datasets and enforces smoothness in the evolution of the weights. At the same time, it introduces an additional hyper-parameter, \emph{i.e.,} the length of the window $W$, whose fine-tuning is crucial to obtain reliable results. Indeed, short windows reduce the computational time required to estimate the weights over the training set, at the price of possibly resulting in poor estimates. On the other hand, the accuracy of the estimated weights is likely to increase when $W$ increases, but so does the computational load and the training time. 
    
    Once Algorithm~\ref{algo1} has been iterated and the convex combination of experts is ready for deployment, it is fundamental to have a procedure that allows for the prediction of the convex combination's weights to ultimately forecast the behavior of the system. Indeed, given an unseen regressor sequence $x_{v}(t)$, $t=1,\ldots,T_{v}$ we must indicate a set of estimates $\{\hat{\Omega}_{v}(t)\}_{t=1}^{T_{v}}$ for the associated weights to predict the outputs as
    \begin{equation}
        \hat{y}_{v}(t)=\sum_{i=1}^{M}\hat{\omega}_{v}(t)f_{i}(x_{v}(t);\theta^{\star}_{i}),~~t=1,\ldots,T_{v}
    \end{equation}
    where $\Theta^{\star}$ results from Algorithm~\ref{algo1}. 

    Focusing on one-step-ahead predictions, $\hat{\Omega}_{v}(t)$ can be obtained by following the same rationale exploited for inference in \cite[Section 4.2]{bemporad2018fitting}. Namely, one can either solve
    \begin{subequations}
    \begin{equation}
        (\hat{y}_{v}(t),\hat{\Omega}_{v|t}) \!\leftarrow \arg\!\min_{y,\Omega} J_{t}(X_{v|t},Y_{v|t-1},y;\Omega,\Theta^{\star\!}),
    \end{equation}
    where $\hat{\Omega}_{v|t}\!=\!\{\hat{\Omega}_{v}(\tau)\}_{\tau=1}^{t}$ are the weights to be estimated up to time $t$, $X_{v|t}\!=\!\{x_{v}(\tau)\}_{\tau=1}^{t}$, $Y_{v|t-1}=\{y_{v}(\tau)\}_{\tau=1}^{t-1}$ and
    \begin{align}
        \nonumber & J_{t}(X_{v|t},Y_{v|t-1},y;\Omega,\Theta^{\star})=\ell_{t}(y,x_{v}(t);\Omega(t)\Theta^{\star})\\
        &~~~~~+\sum_{\tau=1}^{t-1}\ell_{\tau}(y_{v}(\tau),x_{v}(\tau);\Omega(\tau),\Theta^{\star})\!+\!\mathcal{L}_{t}(\Omega),
    \end{align}
    with
    \begin{equation*}
        \mathcal{L}_{t}(\Omega)=
        \eta \!\sum_{\tau=2}^{t}\!\mathcal{L}^{\mathrm{trans}}(\Omega(\tau),\Omega(\tau-\!1)),
    \end{equation*}
    \end{subequations}
     for all $t=1,\ldots,T_v$. Alternatively, when recursive inference is considered, one can iteratively solve
    \begin{equation}
   (\hat{y}_{v}(t),\hat{\Omega}_{v}(t)) \!\leftarrow \!\arg\!\!\min_{y,\Omega(t)}~J_{t|t-1}(y,x_{v}(t);\Omega(t),\Theta^{\star}),
    \end{equation}
    by choosing
    \begin{align*}
        \nonumber J_{t|t-1}(y,x_{v}(t);\Omega(t),\Theta^{\star})=&
        \ell_{t}(y,x_{v}(t);\Omega(t);\Theta^{\star})\\
        &\qquad +
        \mathcal{L}_{t|t-1}(\Omega(t)),
    \end{align*}
    and
    \begin{equation*}
    \mathcal{L}_{t|t-1}(\Omega(t))=
    \eta\mathcal{L}^{\mathrm{trans}}(\Omega(t),\hat{\Omega}_{v}(t-1)),
    \end{equation*}
    for $t=1,\ldots,T_v$.

    Nonetheless, if we assume that $\omega(t)=\omega(x(t))$, $\forall t \in \{1,\ldots,T\}$, namely that the weights dictating the level of trust of each expert are dictated by the feature vector, we can exploit the weight sequence $\Omega^{\star}$ returned by Algorithm~\ref{algo1} to fit an auxiliary model (the gating) encompassing the relationship between the training features and $\Omega^{\star}$. This model devised to infer the level of confidence associated with each expert given any new regressor, provides a soft partition of the feature space that can be used to associate $\hat{\Omega}_{v}(t)$ to $x_{v}(t)$, for all $t=1,\ldots,T_{v}$. To accomplish this additional modeling task, one can leverage a pool of available architectures, depending on the application at hand and the desired level of explainability of this weight predictor. 

    \section{Case studies}\label{sec:examples}
    We now evaluate the effectiveness of the proposed structure and training strategy in two case studies, respectively used to empirically analyze the properties of the learned mixture and to highlight its effectiveness in a real-world scenario. In both cases, the performance of the learned convex combination of experts is quantitatively evaluated by looking at the \emph{mean absolute error} (MAE) and the \emph{goodness of fit} (GoF), \emph{i.e.,}:
    \begin{subequations}
    \begin{align}
    & \mathrm{MAE}=\frac{1}{T_{v}}\sum_{t=1}^{T_{V}}|y_{v}(t)-\hat{y}_{v}(t)|,\label{eq:MAE}\\
    & \mathrm{GoF}=\max\left\{1-\frac{\sum_{t=1}^{T_v}(y_{v}(t)-\hat{y}_{v}(t))^{2}}{\sum_{t=1}^{T_v}(y_{v}(t)-\bar{y}_{v})^{2}},0\right\},\label{eq:GoF}
    \end{align}
    \end{subequations}
    where $T_{v}$ is the length of the dataset against which the performance of the convex combination is assessed, $y_{v}(t)$ are the outputs to be reconstructed, $\hat{y}_{v}(t)$ are their estimates obtained with the learned combination of experts, while $\bar{y}_{v}$ is the samples average of $\hat{y}_{v}(t)$. Note that the MAE index allows us to detect inaccurate reconstructions that follow the trend of the data but are scaled in range, but its values might not be affected by relatively small oscillations around the true output. The latter nonetheless impacts the GoF index, thus making these two indicators complementary in assessing the performance of the combination of experts.   
    \subsection{Numerical example}
    \begin{table}[!tb]
		\centering
    		\caption{Numerical example: local models coefficients}\label{tab:2LocalTheta}
		\begin{tabular}{c|cccc}
		  $i$ & \textbf{$\theta_{i,1}$} & \textbf{$\theta_{i,2}$} & \textbf{$\theta_{i,3}$} & \textbf{$\theta_{i,4}$}\\
			\hline
			\hline
			\textbf{1} & 0.50 & -0.30 & 0.90 & -0.80\\
			\hline
			\textbf{2} & 0.10 & 0.40 & -0.60 & -0.50\\
            \hline
		\end{tabular}
	\end{table}
    	\begin{figure}[!tb]
		\centering
	\includegraphics[scale=.5]{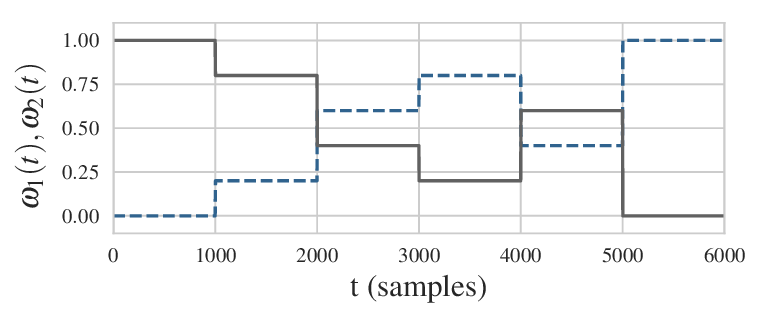}\vspace{-.2cm}
		\caption{Numerical example: true weights $\omega(t)$ used to generate the training set. The first component $\omega_{1}(t)$ is shown in black, while $\omega_{2}(t)$ is the dashed blue line.}
		\label{fig:2m_true_alpha}
	\end{figure}
    \begin{table}[!tb]
       \centering
       \caption{Numerical example: hyper-parameters of Algorithm~\ref{algo1}, 
       terminating the ADMM iterations to learn the local experts after $j_{\mathrm{max}}$ recursions.}
       \label{tab:param_toy}
       \begin{tabular}{ccccccc}
             $M$ & $\lambda_{\theta}$ 
             & $\eta$ & $\rho$ & $\beta$ & $j_{\mathrm{max}}$ & $k_{\mathrm{max}}$\\
             \hline 
             \hline
             2 & $ 5\cdot10^{-3}$ 
             & $50$ & $10^{-9}$ & $10^{-6}$& $120$ & $70$\\
            \hline
       \end{tabular}
    \end{table}
    With the aim of assessing the quality of the learned combination of experts, as a first case study we consider a data-generating system that is actually described by a convex combination of $2$ linear models, \emph{i.e.,} the true system in \eqref{eq:true_system} is characterized by
    \begin{subequations}
        \begin{equation}
             f^{\mathrm{o}}(t)=\sum_{i=1}^{M}\omega^{\mathrm{o}}(t)y_{i}(x(t)), 
        \end{equation}
        with 
        \begin{equation}
            y_{i}(x(t))\!=\!\underbrace{\begin{bmatrix}
                y(t\!-\!1) & y(t\!-\!2) & u(t\!-\!1) & u(t\!-\!2)
            \end{bmatrix}^{\!\top\!}}_{x^{\!\top\!}(t)}\theta_{i}
        \end{equation}
        and the parameters reported in \tablename{~\ref{tab:2LocalTheta}}.
    \end{subequations}
    The data used to learn and validate the convex combination of experts are collected by considering a pseudo-random binary signal (PRBS) $u(t)$ of length $T=6000$ samples and a Gaussian distributed, zero-mean, white noise $e(t)$ (see \eqref{eq:true_system}) with variance $\sigma_{e}^{2}=4\cdot10^{-2}$, yielding a signal-to-noise ratio (SNR) of $20.27$~[dB]. 
    The true weights used throughout the data collection are instead reported in \figurename{~\ref{fig:2m_true_alpha}}, showing that only at the beginning and at the end of our experiment a single local model is active (see the first and last $1000$ values of $\omega(t)$) while overall the two local models concur to give the output to the system.\\
    \begin{table}[!tb]
		\centering
		\caption{Numerical example: true \emph{vs} estimated experts' parameters.}\label{tab:2m_Theta}
       \resizebox{\columnwidth}{!}{
		\begin{tabular}{ccccc|cccc}
		\multicolumn{1}{c}{}	& \textbf{$\theta_{1,1}$} & \textbf{$\theta_{1,2}$} & \textbf{$\theta_{1,3}$} & \textbf{$\theta_{1,4}$} & \textbf{$\theta_{2,1}$} & \textbf{$\theta_{2,2}$} & \textbf{$\theta_{2,3}$} & \textbf{$\theta_{2,4}$}\\
			\hline
			\hline
			\multicolumn{1}{c|}{\textbf{True}} & 0.50 & -0.30 & 0.90 & -0.80 & 0.10 & 0.40 & -0.60 & -0.50\\
			\hline
			\multicolumn{1}{c|}{\textbf{Estimated}} &0.50 & -0.30 & 0.88 & -0.80 & 0.09 & 0.42 & -0.63 & -0.50\\
   \hline
		\end{tabular}}
	\end{table}
	\begin{figure}[!tb]
		\centering
           \begin{tabular}{c}
           \subfigure[$\omega_{1}(t)$ \emph{vs} $\hat{\omega}_{1}(t)$]{\includegraphics[scale=.6]{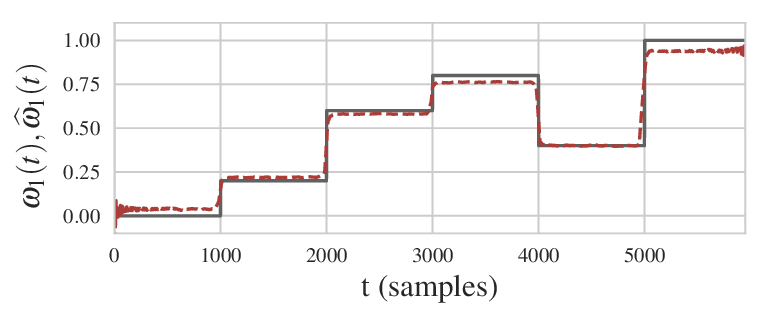}}\\
           \subfigure[$\omega_{2}(t)$ \emph{vs} $\hat{\omega}_{2}(t)$]{\includegraphics[scale=.6]{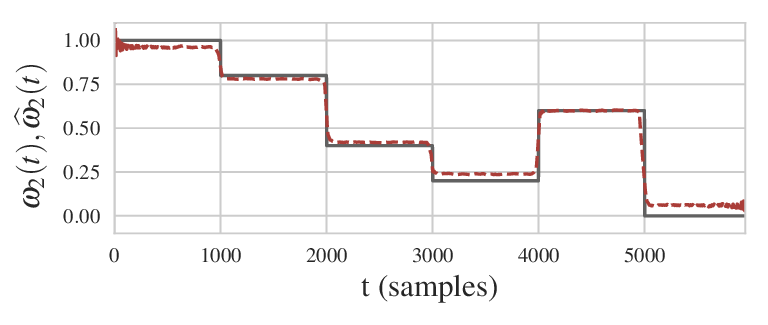}}
            \end{tabular}
		\caption{Numerical example: true (black line) \emph{vs} reconstructed weights (red dashed).}
		\label{fig:2m_Local}
	\end{figure}
    In order to assess the performance of Algorithm~\ref{algo1}, we conducted a 10-fold cross-validation procedure. This involved using 10\% of the available data for validation. We also set the hyper-parameters of Algorithm~\ref{algo1} as specified in \tablename{~\ref{tab:param_toy}}. We made the assumption that we already knew the true number of local models, $M=2$, and we considered five randomly generated possible initial sequences. The result of this training procedure is a set of estimated local experts, which are detailed in \tablename{~\ref{tab:2m_Theta}}. Remarkably, Algorithm~\ref{algo1} successfully reconstructed the true experts with only negligible errors in their parameters. Additionally, the weights converged to values very close to the true ones in less than 10 iterations, as shown in Figure \figurename{~\ref{fig:2m_Local}}.

    \subsubsection{Sensitivity analysis}
    By still carrying out a 10-fold cross-validation procedure, we now assess the sensitivity of Algorithm$~\ref{algo1}$ to the choice of its hyper-parameters and the entity of the noise corrupting the data through the quantitative indexes in \eqref{eq:MAE} and \eqref{eq:GoF}. 
    
    \textit{Sensitivity to the hyper-parameters.}
    \begin{figure}[!tb]
       \centering
       \begin{tabular}{c}\subfigure[MAE and GoF in validation \emph{vs} $\lambda_{\theta}$ \label{fig:sensitiviy_hypertheta}]{\includegraphics[scale=.6]{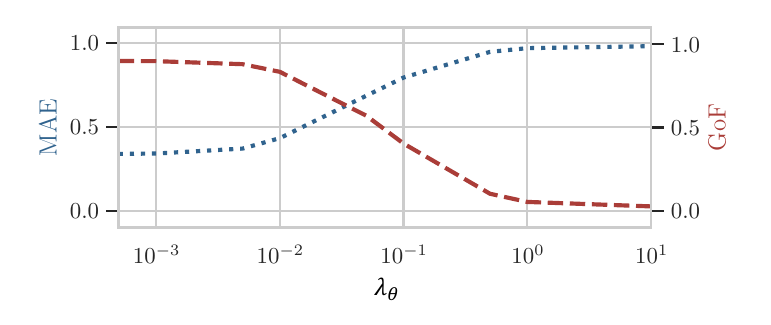}}\\
       \subfigure[MAE and GoF in validation \emph{vs} $\eta$\label{fig:sensitiviy_hypereta}]{\includegraphics[scale=.6]{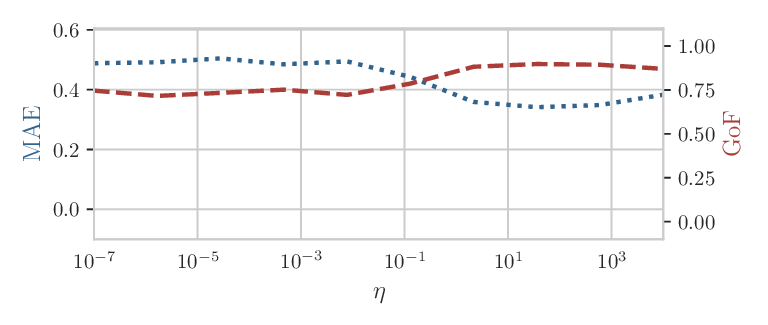}}\\
        \subfigure[MAE and GoF in validation \emph{vs} $\rho$\label{fig:sensitiviy_hyperrho}]{\includegraphics[scale=.6]{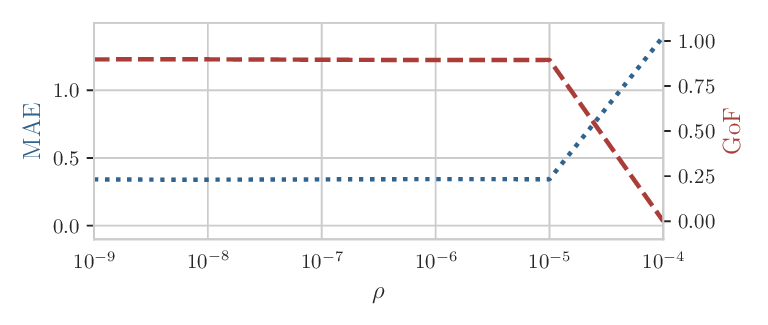}}
        \end{tabular}\vspace{-.2cm}
        \caption{Numerical example: MAE \eqref{eq:MAE} and GoF \eqref{eq:GoF} obtained in validation \emph{vs} changes in the hyper-parameters of Algorithm~\ref{algo1}.}
        \label{fig:sensitiviy_hyper}
    \end{figure}
    To analyze the sensitivity of the proposed alternated approach to the choice of the hyper-parameters, we train new convex combinations of experts by varying one hyper-parameter at a time while keeping the others fixed to their \textquotedblleft nominal\textquotedblright \ values in \tablename{~\ref{tab:param_toy}}.\\
    \VB{As expected, }
    the quality of the learned combination tends to be affected by changes in $\lambda_{\theta}$, $\eta$ and $\rho$. In particular, as summarized in \figurename{~\ref{fig:sensitiviy_hypertheta}}, the mixture converges to a configuration where 
    the 
    resemble actual one 
    when $\lambda_{\theta}$ is lower than $10^{-2}$. This result can be explained considering 
    that high normalization may 
    steer the local parameters to values that are excessively close and small.
    \\
    On the other hand, \figurename{~\ref{fig:sensitiviy_hypereta}} shows that considering excessively low values of $\eta$ tends to slightly deteriorate performance, confirming that in situations in which the level of trustworthiness of experts slowly varies over time (as in \figurename{~\ref{fig:2m_true_alpha}}) enforcing slow variations in the estimated weights positively impact on the quality of the final combination.\\
    Lastly, we observe that performance deteriorates for values of $\rho$ above $10^{-5}$ (see \figurename{~\ref{fig:sensitiviy_hyperrho}}). This result is in line with the fact that high values of $\rho$ tend to increase the influence of the augmentation term in the Lagrangian (see \eqref{eq:Lagrangian}), at the price of reducing the relative importance of the actual fitting loss one aims at optimizing.
    
    \textit{Sensitivity to noise.}
    	\begin{figure}[!tb]
		\centering
			\includegraphics[scale=.6]{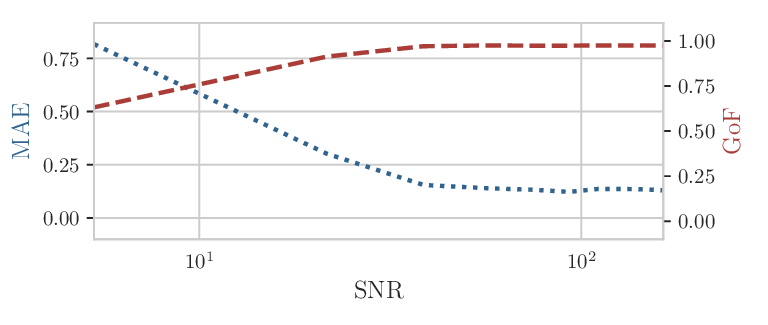}
		\caption{Numerical example: MAE \eqref{eq:MAE} and GoF \eqref{eq:GoF} in validation against noise level, measured via the signal-to-noise ratio (SNR).}\vspace{-.2cm}
		\label{fig:2s_snr}
	\end{figure}
    In evaluating the sensitivity of Algorithm~\ref{algo1}, we have generated new datasets of length $T=6000$ for different levels of noise, \emph{i.e.,} considering increasing values for the variance $\sigma_{e}^{2}$ of the noise corrupting the true combination's outputs in the interval $[10^{-6},10]$. As shown in \figurename{~\ref{fig:2s_snr}} a SNR above approximately $21$~dB already allows to attain a GoF in validation above 80\%. This is motivated by the fact that for lower values of SNR the effect of the noise makes it more difficult to identify the proper weightings, consequently deteriorating the accuracy of the experts and the quality of the reconstructed output.

    \subsection{An experimental example: side-slip estimation}
    We now evaluate the potential of the proposed structure and learning strategy by considering the problem tackled in \cite{breschi2020vehicle}, namely learning an estimator for the side-slip angle from experimental data. Specifically, the available dataset includes our target variable (namely the side-slip angle) measured via a Correvit-S-Motion non-contact optical sensor, along with the steering angle and wheel velocities, and the vehicle's linear and angular accelerations (used to construct the regressor). These data have been collected over different trials where an electric sedan has been driven on a closed, dry track in winter and summer, exploring the range\footnote{The higher values of side-slip angles corresponds to an aggressive driving style.} of side-slip angles $[0,35]$~[deg]. As also done in \cite{breschi2020vehicle}, samples associated with velocities below $20$ [km/h] have been removed, as the optimal sensor employed to measure the side-slip angle is not accurate at low velocities, and they have been down-sampled in a preliminary preprocessing phase from $200$~Hz to $25$~Hz to reduce the computational burden of the learning procedure. The data are then split into:
    \begin{itemize}
        \item a \textbf{training set}, concatenating $T_{\mathrm{w}}=6333$ samples from a winter test and $T_{\mathrm{s}}=5797$ samples referred to a summer test, for a total of $T=12130$ training data;
        \item a \textbf{validation set}, also composed of two dataset gathered over a winter and a summer test, respectively of length $5818$ and $5895$;
        \item a \textbf{test set} concatenating a winter dataset of $5771$ samples and a summer one comprising $5873$ samples, for a total of $T_{v}=11644$ data points used for testing.
    \end{itemize}

    In the following, these data are used to evaluate the performance of the convex combination when considering local experts of a different nature, \emph{i.e.,} all gray-boxes and mixtures of gray and black box models, and to compare the proposed structure and learning approach against competitors in the literature. In particular, we consider $M=2$ experts for our structure to be compatible with parallel, series (see \figurename{~\ref{fig:schemes_1}}) and the LIME+SHAP \cite{pintelas2020grey} architectures. Moreover, when training the combination of experts, we consider a single initial sequence of weights ($N=1$) dictated by the physics of the problem, namely by setting $\omega_{1}^{(0)}(t)=1$ for all $t$ such that the longitudinal acceleration $a_{x,t}\geq 0.3$~[m/s$^{2}$] and zero otherwise to distinguish between data associated with normal and aggressive driving styles. Moreover, due to the dimension of the dataset, the second step of Algorithm~\ref{algo1} is carried out by setting $\beta=10^{-11}$ and by using the windowing strategy described in Section~\ref{sec:mixtures}, by setting $W=100$. 

    When both the side-slip and the road bank angles are negligible, it is well-known that the dynamics of the side-slip angle can be inferred from the single-track model, namely \begin{subequations}\label{eq:single_track}
     \begin{align} 
     &\dot{r}_{t} \!=\! (s_{t}\!-\!y_{t})\frac{l_f c_f}{J_z} \!-\! \frac{r_{t}}{\nu_{x,t}}\left(\frac{l_f^2c_f}{J_z} \!+\! \frac{l_r^2c_r}{J_z}\right) + y_{t}\frac{l_r c_r}{J_z},
        \\
        &\dot{y}_{t}\!=\! \left(\!\frac{s_t}{\nu_{x,t}}\!-\!\frac{y_{t}}{\nu_{x,t}}\!\right)\frac{c_f}{m}\!-\! \frac{r}{\nu_{x,t}^2}\left(\!1\!-\!\frac{l_fc_f}{m} \!+\! \frac{l_rc_r}{m}\!\right) \!+\! \frac{y_t}{\nu_{x,t}}\frac{c_r}{m},
    \end{align}
    \end{subequations}
     with $y_{t}$ and $s_t$~[deg] being the side-slip and steering angles at time\footnote{The dependence on time is here indicated in the subscript, to distinguish these continuous-time variables from their discrete samples used for learning.} $t \in \mathbb{R}$, respectively, $r_t$~[deg/s] is the yaw rate of the vehicle and $\nu_{x,t}$~[m/s] is its longitudinal speed, while $c_{f},c_{r}$~[N/deg], $m$~[kg], $J_{z}$~[kg m], and $l_{f},l_{r}$~[m] are the tires' cornering stiffness, the mass of the vehicle, its moment of inertial along the vertical axes and the distances between the vehicle's center of mass and the contact point of the front and rear tires with the ground.\\     
     Despite providing a physics-informed prior on the behavior of the target variables, these equations thus have limited validity (they are accurate for \textquotedblleft small\textquotedblright \ angles only) and they are likely to provide an incomplete picture of the side-slip dynamics, especially when the driving style becomes aggressive. In this first analysis, we thus ask ourselves if the proposed convex combination of experts can improve the descriptive capabilities of the sole single-track model and which expert should pair it to attain the most accurate predictions. In particular, we consider the following three \textquotedblleft flavours\textquotedblright \ for the combination of experts.
     \begin{enumerate}
         \item \textbf{Two single-track experts} (2ST), thus considering two gray-box models, that should nonetheless be specialized in describing the behavior of the system at different operating conditions and collaborate whenever their relative simplicity does not allow them to capture the complexity of the side-slip behavior.
         \item \textbf{A single-track expert + a random forest} (ST+RF), which combines the physics informed model dictated by \eqref{eq:single_track} with a random forest with $50$ random trees, each of depth 3. In this case, the black-box expert is introduced to aid the (explainable) single-track model whenever it cannot describe the behavior of the side-slip angle. Note that, the complexity of the black-box model is still contained, to make it as interpretable as possible.
         \item \textbf{A single-track expert + a polynomial expert} (ST+P), blending a single-track model with a $3$-rd order polynomial in the regressor. This combination is introduced to still consider a black-box expert intervening when the single-track expert cannot describe the overall behavior of the system, while maintaining a level of interpretability that is superior to that of the RF expert.   
     \end{enumerate}
     All these combinations of experts have been trained by running Algorithm~\ref{algo1} and the ADMM-based routine in \eqref{eq:ADMM2_1}-\eqref{eq:ADMM2_3} for a maximum of $k_{\mathrm{max}}=20$ and $j_{\mathrm{max}}=50$ iterations, respectively, considering the regressor:
      \begin{equation}
         x(t)=\begin{bmatrix}
        \alpha_{z}(t) & \frac{r(t)}{\nu_{x}(t)} & s(t)        
         \end{bmatrix}^{\top},
     \end{equation}
     where $\alpha_{z}(t)$ is the yaw angular acceleration, also available among the measured data.

     For each configuration, the hyperparameters used for training were selected through grid-search, focusing on the penalties that yielded the best GoF on the validation set. The tests revealed that low values of $\lambda_\theta$ tend to result in overfitting, showing good performance on validation but poor generalization on the test set. Conversely, excessively high $\lambda_\theta$ values degrade performance on both validation and test sets. Therefore, $\lambda_{\theta}$ was set to $10^{2}$ to balance generalization and overfitting.
Similarly, extremely high values of $\eta$ (above $10^{5}$) were found to harm performance by excessively limiting weight variations, while lower values allowed weights to change too rapidly with data noise. Hence, $\eta$ was fixed at $10^{5}$ to strike a balance.
Performance was relatively insensitive to the choice of $\rho$ 
when other hyperparameters were well-tuned, particularly if $\rho$ remained small. Accordingly, $\rho$ was set to $10^{-5}$ for learning all combinations
.

    Meanwhile, independently from the nature of the experts, we assume the weights $\Omega$ of the convex combination of experts to be functions of the regressor, namely $\Omega(t)=\Omega(x(t))$ (see \eqref{eq:omega_t}), which is reasonable considering the modeling choices made in \cite{breschi2020vehicle} and references therein). This allows us to learn to predict the weights based on the regressors by training a regression model leveraging the outcome of Algorithm~\ref{algo1}. Here we consider a random forest regression model composed of $35$ random trees with a maximum depth of $25$.\\
    \begin{table}
        \centering
        \caption{Side-slip estimation: MAE and GoF in testing.}
        \label{tab:models_indexes}
         \begin{tabular}{cccc}
            \multicolumn{1}{c}{} & 2ST & ST+RF & ST+P\\
            \cline{2-4}
            \hline
             MAE & 1.573 & 1.601 & 1.599\\
             \hline
              GoF & 0.815 & 0.702 & 0.591 \\
              \hline
        \end{tabular}
    \end{table}
    \begin{figure*}[!tb]
        \centering
        \begin{tabular}{cc}
        \subfigure[Winter frame: true \emph{vs} reconstructed side-slip]{\includegraphics[scale=.5]{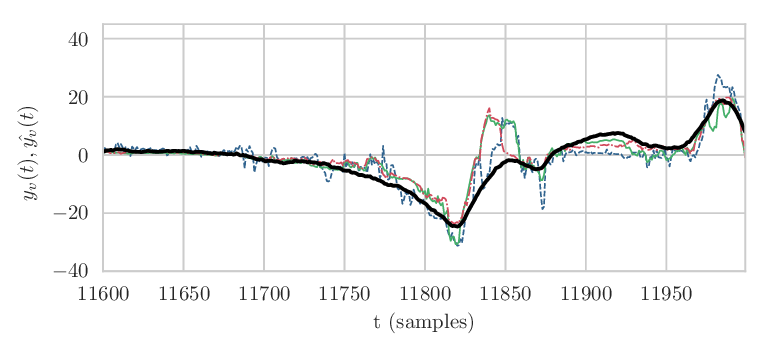}} & \subfigure[Summer frame: true \emph{vs} reconstructed side-slip]{\includegraphics[scale=.5]{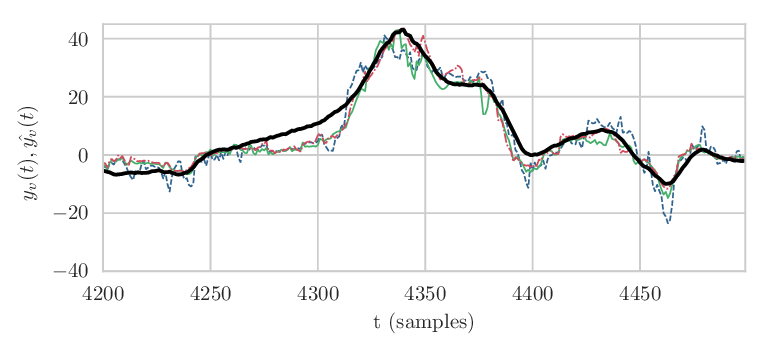}}\\
        \subfigure[Winter frame: predicted weights]{\includegraphics[scale=.5]{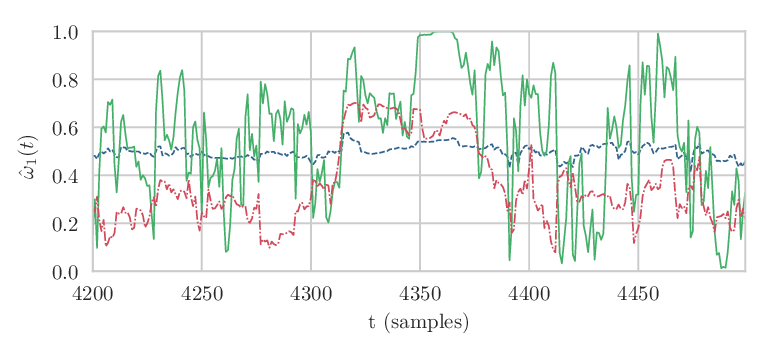}} & \subfigure[Summer frame: predicted weights]{\includegraphics[scale=.5]{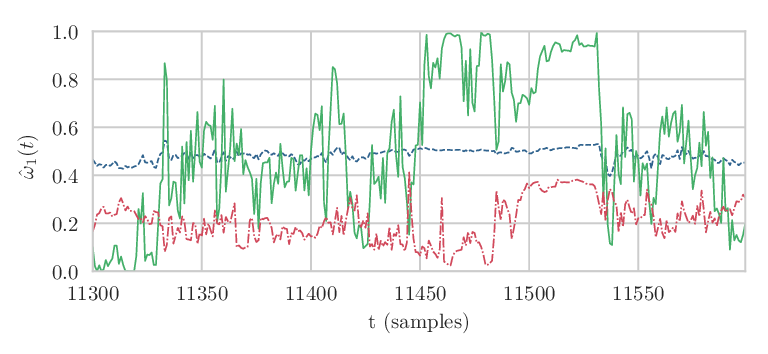}}
        \end{tabular}\vspace{-.2cm}
        \caption{Side-slip estimation: true (black) \emph{vs} predicted outputs and weights over the test set with the different architectures, 2ST (red), ST+RF (blue), and ST+P (green). For the sake of visualization, we only show the 1-st component of the weight vector over time.}\label{fig:comparison_experts}
    \end{figure*}
    The mixture of experts considered in the study accurately reconstructs side-slip behavior in various environmental conditions and driving styles, as evidenced by performance indexes and predicted trajectories. However, in the case of the ST+RF architecture, the convex combination collapses into a parallel ensemble with equal weighting, affecting interpretability. Changes in expert confidence are generally tied to changes in driving style, except for the ST+RF case, where it becomes challenging to discern which expert is better suited. Additionally, the ST+RF architecture has a tendency to predict single-point outliers in certain scenarios, as observed around the 4455-th sample.\\
    Instead, 
    the ST+P combination leads to weights that abruptly change over successive time instants which is quite realistic in practice, once again reducing the interpretability of the combination's outcome.\\
    All these observations reflect in the (slightly) better metrics achieved by the 2ST combination, as reported in \tablename{~\ref{tab:models_indexes}}, leading us not to consider any black-box model but, at least in this scenario, to stick with a combination of two gray-box ones (see \figurename{~\ref{fig:b_test}} for a comparison of the true and predictive side-slip for this structure over the entire test set).
    \begin{figure}[!tb]
 	\centering	\includegraphics[scale=.5]{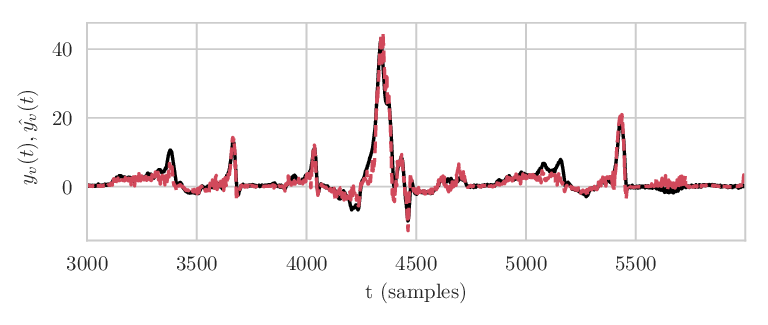}\vspace{-.2cm}
	\caption{Side-slip estimation: true (black) \emph{vs} predicted (dashed red) side-slip over the test set.}
	\label{fig:b_test}
    \end{figure}
					
    \begin{table}
    \centering
	\caption{Side-slip estimation: comparison with state-of-the-art competitors w.r.t. the performance indexes in testing.}
	\label{tab:SoAperformance}
	\begin{tabular}{ccccc}
        \multicolumn{1}{c}{} \hspace*{-.2cm}&\hspace*{-.2cm} Serial \cite{gray2018hybrid} \hspace*{-.2cm}&\hspace*{-.2cm} Parallel \cite{xiong2002grey} \hspace*{-.2cm}&\hspace*{-.2cm} LIME+SHAP \cite{pintelas2020grey} \hspace*{-.2cm}&\hspace*{-.2cm} \textbf{2ST (ours)}\hspace*{-.2cm}\\
        \cline{2-5}
        \hline
        MAE \hspace*{-.2cm}&\hspace*{-.2cm} 1.622 \hspace*{-.2cm}&\hspace*{-.2cm} 2.501 \hspace*{-.2cm}&\hspace*{-.2cm} 2.358 \hspace*{-.2cm}&\hspace*{-.2cm} \textbf{1.573}\hspace*{-.2cm}\\
        \hline
        GoF \hspace*{-.2cm}&\hspace*{-.2cm} 0.803 \hspace*{-.2cm}&\hspace*{-.2cm} 0.531 \hspace*{-.2cm}&\hspace*{-.2cm} 0.587 \hspace*{-.2cm}&\hspace*{-.2cm} \textbf{0.815}\hspace*{-.2cm}\\
        \hline 
	\end{tabular}
\end{table}
\begin{figure}[!tb]
	\centering
    \begin{tabular}{c}
    \subfigure[Winter frame: true \emph{vs} predicted side-slip]{\includegraphics[scale=.5]{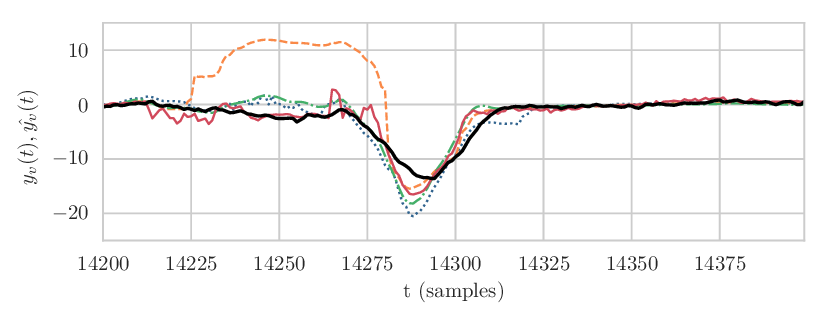}}\\
    \subfigure[Summer frame: true \emph{vs} predicted side-slip]{\includegraphics[scale=.5]{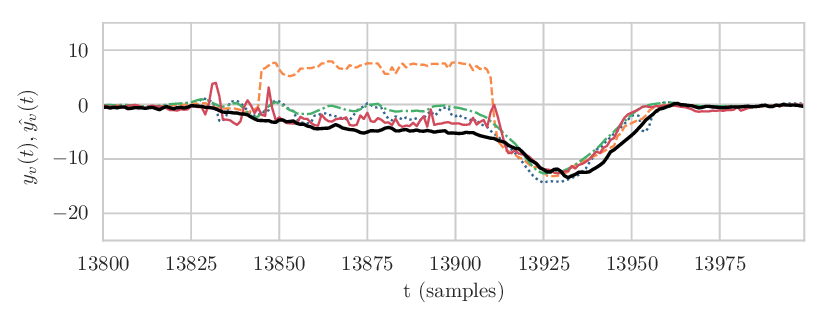}}        
    \end{tabular}\vspace{-.2cm}
	\caption{Side-slip estimation: true (black) \emph{vs} predicted side-slip over snapshots of the test set obtained with the serial (dotted blue), parallel (dashed orange), LIME+SHAP (dotted-dashed green), and our 2ST architecture (red).}
	\label{fig:b_SoAComparison}
\end{figure}
We finally conduct a comparative analysis with existing approaches, including a serial architecture from \cite{gray2018hybrid}, a parallel structure from \cite{xiong2002grey}, and the LIME+SHAP approach \cite{pintelas2020grey}. The latter is designed for large datasets, involving a two-stage process with a black-box model followed by a gray-box model.
Black-box models in all structures, including the proposed one, are selected and trained on the same dataset for the local models in the convex combination of experts. Performance indexes on the training set, detailed in \tablename{~\ref{tab:SoAperformance}}, reveal that the proposed architecture outperforms the parallel and LIME+SHAP structures, slightly surpassing the serial architecture while maintaining superior interpretability due to the inclusion of trust level information through the weights $\omega(t)$.
This superiority is emphasized by snapshots of predicted side-slip trajectories presented in \figurename{~\ref{fig:b_SoAComparison}}, demonstrating that the proposed architecture excels, especially in aggressive driving styles. It showcases improved resilience to sudden non-linear changes compared to state-of-the-art approaches, particularly evident in the predictions from the parallel structure. Even the serial approach, considered promising in blending gray and black-box models, exhibits over-shoots and under-shoots at abrupt changes, affecting its predictive performance.

\section{Conclusions}\label{Sec:conclusions}
This work revisits mixtures of experts with the aim of introducing a more flexible and interpretable structure, combining the explainability of physics-based gray models with the representation power of black-box models. The approach involves a novel cost function allowing separate training of individual experts, thus facilitating the use of standard learning tools at a local level. The objective is to achieve a balance between collaboration, competition, and explainability over the feature domain. Numerical examples and an experimental case study demonstrate the effectiveness of the proposed approach.\\
On the methodological side, future works will be devoted to analyze the convergence of each stage of the proposed learning pipeline. To enhance the method further, future efforts will focus on automating hyper-parameter tuning, recognizing its impact on the learned combination's performance.

\section*{Acknowledgments}
The work was partially supported by: Italian Ministry of Enterprises and Made in Italy in the framework of the project 4DDS (4D Drone Swarms) under grant no. F/310097/01-04/X56; FAIR (Future Artificial Intelligence Research) project, funded by the NextGenerationEU program within the PNRR-PE-AI scheme (M4C2, Investment 1.3, Line on Artificial Intelligence)”; PRIN project TECHIE: “A control and network-based approach for fostering the adoption of new technologies in the ecological transition” Cod. 2022KPHA24 CUP: D53D23001320006.\\
This study was also partially supported by the MOST – Sustainable Mobility National Research Center, Spoke 5 “Light Vehicle and Active Mobility” (M4C2, Investment 1.4 - D.D. 1033 17/06/2022, CN00000023), 

\bibliographystyle{abbrv}
\typeout{}
\bibliography{main}

\appendix
\section{Proof of Lemma~\ref{lemma:jump_models}}\label{appendix:A}
When $\Omega(t) \in \{0,1\}^{M}$ the constraint in \eqref{eq:probability_of_one} is satisfied by definition. Meanwhile, \eqref{eq:sum_probability} implies that, for each $t \in \mathbb{N}$, $\exists!~i \in \{1,\ldots,M\}~~\mbox{ s.t. } ~~\omega_{i}(t)=1,$ while $\omega_{j}(t)=0,~~\forall j \neq i, j=1,\ldots,M$.
We can thus leverage the connection between the convex combination of experts and jump models introduced in Section~\ref{Sec:relation}, and introduce the \emph{mode sequence} $S=(s(1),\ldots,s(T))$. By replacing it into the \emph{fitting cost} in \eqref{eq:overall_cost1}, the latter can be equivalently recast as:
\begin{equation}\label{eq:overall_costJump}
	J(X,\!Y\!;S,\!\Theta)\!=\!\!\sum_{t=1}^{T}\!\ell\left(x(t),\!y(t);\theta_{s(t)}\right)\!+r(\Theta)+\mathcal{L}(S),
\end{equation}
where the \emph{mode sequence loss} $\mathcal{L}(S)$ takes the place of the \emph{weight shaper} $\mathcal{L}(\Omega)$, based on \eqref{eq:s_constitutive}. This fitting cost corresponds to the one introduced in \cite[Section 2.3]{bemporad2018fitting}, thus concluding the proof.

\section{Derivation of \eqref{eq:ADMM2_1}-\eqref{eq:ADMM2_3}}\label{Appendix:B}
By exploiting \eqref{eq:weighter_avgAux}, the problem at step \eqref{eq:ADMM1_2} becomes
\begin{subequations}\label{eq:replace_aux}
	\begin{align}
		&\min_{Z,\bar{Z}}~\sum_{t=1}^{T}\left(g(M\bar{z}(t))+\frac{\rho}{2}\sum_{i=1}^{M}\|z_{i}(t)-a_{i}(t)\|_{2}^{2}\right) \label{eq:replace_auxcost}\\
		&~~  \mbox{s.t. } z(t)=\frac{1}{M}\sum_{i=1}^{M}z_{i}(t),~~t=1,\ldots,T,
	\end{align}
\begin{equation}
	a_{i}^{j+1}(t)=\omega_{i}^{k}(t)\hat{y}_{i}(t;\theta_{i}^{j+1})+u_{i}^{j}(t).
\end{equation}
\end{subequations}
By minimizing for $\bar{Z}$ fixed, the auxiliary variables can be redefined as
	$z_{i}^{j+1}(t)=a_{i}^{j+1}(t)$
that, in turn, implies 
\begin{equation*}
	\bar{z}^{j+1}(t)=\bar{a}^{j+1}(t)=\frac{1}{M}\sum_{i=1}^{M}\left(\omega_{i}^{k}(t)\hat{y}_{i}(t;\theta_{i}^{j+1})+u_{i}^{j}(t)\right).
\end{equation*}
Therefore, it holds that
\begin{equation}\label{eq:needed_substitution}
	z_{i}^{j+1}(t)=a_{i}^{j+1}(t)+\bar{z}^{j+1}(t)-\bar{a}^{j+1}(t),
\end{equation}
and, thus, the auxiliary variables can be updated solving the following unconstrained optimization problem,
\begin{equation}\label{eq:aux_update}
	\min_{\bar{Z}}~\sum_{t=1}^{T}\left(g(M\bar{z}(t))+\frac{\rho}{2}\sum_{i=1}^{M}\|\bar{z}(t)-\bar{a}^{j+1}(t)\|_{2}^{2}\right) 
\end{equation}
ultimately leading to the update in \eqref{eq:ADMM2_2}. We thus have to manipulate \eqref{eq:ADMM1_1} and \eqref{eq:ADMM1_3} for them to depend on $\bar{z}^{j}(t)$, rather than on the single auxiliary variables $\{z_{i}(t)\}_{i=1}^{M}$, with $t=1,\ldots,T$. Thanks to the equality in \eqref{eq:needed_substitution}, the Lagrange multipliers can be updated as
\begin{align}
	\nonumber u_{i}^{j+1}(t)&=\underbrace{u_{i}^{j}(t)+\omega_{i}^{k}(t)\hat{y}_{i}(t;\theta_{i}^{j+1})}_{:=a_{i}^{j+1}(t)}-z_{i}^{j+1}(t)\\
	\nonumber & = \bar{a}^{j+1}(t)-\bar{z}^{j+1}(t)\\
	&=\bar{u}^{j}(t)\!+\!\frac{\Omega^{k}(t)\mathbf{f}(x(t);\Theta^{j+1})}{M}\!-\!\bar{z}^{j+1}(t). \label{eq:equality_u}
\end{align} 
Since the right-hand side of the equality is constant with respect to the experts,  we can average both sides and derive the update in \eqref{eq:ADMM2_3}. Also, according to \eqref{eq:needed_substitution}, it holds that
\begin{align*}
	z_{i}^{j}(t)\!-\!u_{i}^{j}(t)&=z_{i}^{j}(t)\!-\!\bar{u}^{j}(t)\\
	&=a_{i}^{j}(t)+\bar{z}^{j}(t)-\bar{a}^{j}(t)\!-\!\bar{u}^{j}(t),
\end{align*}
\begin{align*}
	a_{i}^{j}(t)\!-\!\bar{a}^{j}(t)&\!=\!\omega_{i}^{k}(t)\hat{y}_{i}(t;\theta_{i}^{j})\!+\!u_{i}^{j-\!1}(t)\!-\!\frac{\hat{y}(t;\Theta^{j})}{{M}}\!-\!\bar{u}^{j-\!1}(t)\\
	&=\omega_{i}^{k}(t)\hat{y}_{i}(t;\theta_{i}^{j})-\!\frac{\hat{y}(t;\Theta^{j})}{{M}}
\end{align*}
where the last equality follows from 
	$u_{i}^{j}(t)=\bar{u}_{i}^{j}(t),~\forall j,~ i=1,\ldots,M,~t=1,\ldots,T,$
in turn, resulting from \eqref{eq:equality_u}, and 
\begin{equation}
	\hat{y}(t;\Theta^{j+1})=\Omega^{k}(t)\mathbf{f}(x(t);\Theta^{j+1}).
\end{equation}
Accordingly, by defining
$e_{i}^{j}(t)=\hat{y}_{i}(t;\theta_i)\!-\!\hat{y}_{i}(t;\theta_i^{j}),$
 we can recast \eqref{eq:deltai_1} at the $j$-th iteration as follows:
\begin{align}
	\nonumber \delta_{i}^{j}(t;\theta_i)&\!=\!\omega_{i}^{k}(t)\hat{y}_{i}(t;\theta_i)-z_{i}^{j}(t)+u_{i}^{j}(t)\\
	&\!=\!\omega_{i}^{k}(t)e_{i}^{j}(t)\!+\!\frac{\hat{y}(t;\Theta^{j})}{{M}}\!-\!\bar{z}^{j}(t)\!+\!\bar{u}^{j}(t),
\end{align} 
thus obtaining in the new update in \eqref{eq:ADMM2_1}.
\end{document}